\documentclass[11pt]{article}

\usepackage[preprint]{jmlr2e}

\usepackage{graphicx} %
\usepackage{fullpage}
\usepackage{amsmath}
\usepackage{amssymb}
\usepackage{mathtools}
\usepackage{soul}
\usepackage{stmaryrd}
\usepackage{enumitem}
\usepackage{comment}
\usepackage{rotating}
\usepackage[margin=1in,  headheight=0.5in, headsep=0.25in]{geometry}
\usepackage{xcolor}
\usepackage{hyperref}
\usepackage{cleveref}
\usepackage{wrapfig}
\usepackage{tabularx}
\usepackage{multirow} %
\usepackage{booktabs}  %
\usepackage[font=small,skip=4pt]{caption}
\usepackage{enumitem}
\usepackage{algorithm}
\usepackage{algpseudocode}
\usepackage{url}
\setlist{nolistsep}

\usepackage{booktabs}
\usepackage{siunitx}
\usepackage{listings}
\usepackage{xcolor}
\newcommand{\calM}{\mathcal{M}}
\newcommand{\calG}{\mathcal{G}}
\newcommand{\calB}{\mathcal{B}}

\begin{document}

\title{Beyond Muon: MUD (MomentUm Decorrelation)\\for Faster Transformer Training}

\author{\name Ben S. Southworth \email southworth@lanl.gov \\
       \addr Theoretical Division, Los Alamos National Laboratory, USA.\\
       \AND
       \name Stephen Thomas \email sjt223@lehigh.edu \\
       \addr Lehigh University, Computer Science}
\editor{}
\maketitle

\begin{abstract}
Orthogonalized-momentum optimizers such as Muon improve transformer training by approximately whitening/orthogonalizing matrix-valued momentum updates via a short polar-decomposition iteration. However, polar-factor approximations typically require multiple large matrix multiplications, and the resulting overhead can be substantial and hardware-dependent.
We introduce \textbf{MUD} (\textbf{Mo}ment\textbf{U}m \textbf{D}ecorrelation), a complementary whitening approach that replaces Muon’s polar update with a triangular (Cholesky-like) whitening surrogate inspired by classical Gram--Schmidt and Gauss--Seidel ideas. We show that row-orthonormal matrices are fixed points of the MUD map, relate the inner step to symmetric Gauss--Seidel preconditioning of the Gram matrix, and prove quadratic local convergence near the fixed point. In terms of time-to-perplexity, MUD yields consistent $10-50\%$ wall-clock improvements over tuned AdamW and Muon in time-to-perplexity, typically converging slightly slower per step than Muon but with substantially lower optimizer overhead -- relative to Muon, MUD improves peak tokens/s by roughly $1.3$--$2.6\times$ across most settings and up to nearly $3\times$ on GPT-2 large on an A100. We also demonstrate training a ESM-2 150M protein language model, where MUD matches Muon-level validation perplexity in significantly less wall-clock time.
\end{abstract}

\begin{keywords}
machine learning, Muon, transformer, optimization, LLM
\end{keywords}

\section{Introduction}
\label{sec:intro}

In training neural network models, the choice of optimizer defines both the evolution of the model in parameter space as well as the final loss and the trained model obtained. Faster optimizers reduce training cost and more effective optimizers improve service quality. For large language models (LLMs), despite substantial progress in architectures and scaling, the field has largely converged on adaptive first-order methods---especially AdamW---as a default for training Transformers \citep{kingma2014adam,loshchilov2017decoupled}. AdamW maintains per parameter first- and second-moment statistics and rescales each coordinate of the gradient, providing robustness across heterogeneous gradient magnitudes and easing hyperparameter tuning.

However, AdamW treats each parameter as an independent scalar. For a weight matrix $W\in\mathbb{R}^{n\times m}$ within the larger set of free parameters, it maintains $nm$ separate running averages and applies $nm$ separate coordinate-wise scaling factors. This component-wise treatment ignores the fact that $W$ represents a linear map $\mathbb{R}^m\rightarrow\mathbb{R}^n$ and that its gradient $\nabla_W \mathcal{L}$ (and momentum updates derived from it) can exhibit strong \emph{matrix-structured} correlations. In particular, the update geometry of a linear map is naturally described through its spectrum: correlations across rows or columns correspond to anisotropy in the row/column Gram matrices and to nontrivial singular-value structure. Coordinate-wise methods do not explicitly account for this structure.

Transformer language models are sequence models built from repeated blocks containing multi-head self-attention (MHSA) and position-wise feed-forward networks (MLPs/FFNs), interleaved with normalization and residual connections \citep{vaswani2017attention,radford2019language}. Given a sequence of token embeddings $x\in\mathbb{R}^{T\times d_{\text{model}}}$, each block applies a sequence of linear projections and nonlinearities to produce contextualized representations. Although implementations vary (pre-norm vs.\ post-norm, activation functions, fused attention kernels), the dominant trainable parameters remain concentrated in large \emph{linear weight matrices} that map between feature dimensions. In a typical decoder block with hidden size $d_{\text{model}}$ and MLP width $d_{\text{ff}}$, the heaviest parameter tensors are the 2D weights of:
\begin{itemize}
\item {Attention projections:} $W_Q,W_K,W_V \in \mathbb{R}^{d_{\text{model}}\times d_{\text{model}}}$ and $W_O \in \mathbb{R}^{d_{\text{model}}\times d_{\text{model}}}$ (up to fused QKV and implementation details).
\item {MLP projections:} $W_1 \in \mathbb{R}^{d_{\text{ff}}\times d_{\text{model}}}$ and $W_2 \in \mathbb{R}^{d_{\text{model}}\times d_{\text{ff}}}$ (sometimes with a third matrix in gated variants).
\end{itemize}
In GPT-style models, $d_{\text{ff}}$ is often $4d_{\text{model}}$, so $W_1$ and $W_2$ are typically rectangular with dimensions roughly $(4d_{\text{model}})\times d_{\text{model}}$ and $d_{\text{model}}\times (4d_{\text{model}})$. Consequently, matrix-shaped parameters dominate both the compute and memory footprints.

Viewing layer weights as coherent matrices has motivated many structured second-order and quasi-second-order methods. For example, K-FAC approximates the Fisher (or Gauss--Newton) curvature by a Kronecker factorization, producing updates that act on matrix-shaped parameters via left and right-multiplications by approximate inverses of input/output covariances \citep{martens2015kfac}. Shampoo and related methods precondition tensor gradients using inverse roots of second-moment matrices along multiple tensor modes, producing updates of the form $\Delta W \;\approx\; A^{-1/2}\, G\, B^{-1/2},$ where $G$ is the raw gradient and $A,B$ are accumulated second-moment factors \citep{gupta2018shampoo,vyas2024soap}. While powerful, these methods can be limited by the cost and complexity of maintaining matrix inverse-roots (or factorizations) at scale.

Recently, (approximate) {orthogonalized update} methods have gained significant attention as a cheaper alternative that takes advantage of the matrix structure without explicit estimation of curvature. The Muon optimizer applies a short Newton--Schulz iteration to approximately orthogonalize (or whiten) an evolving momentum/ trajectory matrix for each large hidden-layer weight, while updating embeddings, heads, biases, and normalization parameters with AdamW \citep{jordan2024muon,liu2025muon_scalable}. In linear-algebraic terms, Muon can be viewed as approximating the closest row-orthonormal matrix to each momentum update matrix in a Frobenius sense, replacing a raw direction $M$ with an approximately row-orthonormal direction $Q\approx (MM^T)^{-1/2}M$. This step of 'matrix-awareness' can improve time-to-quality in practice and has led to a growing body of research in related directions, including practical ingredients to make Muon stable and robust on a large scale \citep{liu2025muon_scalable,li2025normuon}, improved orthogonalization kernels to reduce Newton-Schulz accuracy, robustness, and overhead relative to methods like AdamW \citep{boissin2025turbomuon,amsel2025polarexpress,grishina2025cans}, and analysis on how (inexact) orthogonalization affects convergence and hyperparameter choice \citep{shulgin2025beyondideal,ma2026preconditioning_benefits_muon}. A broader perspective on matrix-whitening has also emerged, relating Muon to other matrix-aware optimizers and emphasizing that empirical gains are not solely explained by idealized spectral normalization, motivating alternative whitening constructions and hybrid variance-adaptive components \citep{frans2025really,frans2025splus,lau2025polargrad}.

Despite the strong time-to-quality improvements reported for Muon, its polar-factor approximation can contribute a substantial overhead computational cost.  \Cref{sec:mud} introduces \emph{MomentUm Decorrelation} (MUD), a new matrix-update operator that aims at the same whitening objective. While Muon approximates the symmetric polar factor through repeated Newton--Schulz iterations, MUD replaces this step with a \emph{triangular whitening} operator rooted in classical orthogonalization and stationary preconditioning ideas: it constructs a lower-triangular approximation to the Gram matrix and applies one or a small number of forward (Gauss--Seidel-like) triangular solves to decorrelate the momentum/lookahead matrix. The FLOP-count for a single triangular solve per step is approximately $12\times$ smaller than the original Muon algorithm and in most cases sufficient for good convergence. Row-orthonormal matrices are shown to be fixed points of the MUD map; the inner Gram approximation is connected to symmetric Gauss--Seidel preconditioning, and local quadratic convergence to a perfect whitening (identity Gram matrix) is proved (\Cref{th:conv}). \Cref{sec:experiments} evaluates MUD across multiple Transformer regimes and domains. In text LMs, MUD consistently improves wall-clock time-to-perplexity relative to tuned AdamW and Muon across three corpora (OpenWebText/NanoGPT, WikiText-103, FineWeb-Edu), three GPU platforms (A100, MI250, GH200), and GPT-2 small/medium/large architectures (up to 775M parameters), in many cases achieving a fixed validation perplexity 10--50\% faster. Complementing these time-to-quality results, throughput measurements (\Cref{sec:experiments:throughput}) show that MUD substantially reduces Muon’s per-step overhead, improving tokens/s by roughly $1.3$--$2.6\times$ in most settings and up to nearly $3\times$ on GPT-2 large on an A100. Finally, the method is shown to transfer beyond the natural language by training the ESM-2 150M protein language model on the \texttt{omg\_prot50} masked-language-modeling task, where MUD attains the validation perplexity of the Muon-level at much lower wall-clock times.

\section{MomentUm Orthogonalization based on Newton--Schulz (Muon)}
\label{sec:storyline}

\subsection{Descent methods}

Let $\mathcal{L}(\Theta)$ be a loss function with parameters $\Theta$ and let $\nabla_\Theta \mathcal{L}(\Theta_t)$ denote a mini-batch gradient at step $t$.
For a parameter {shaped by matrix} $W\in\mathbb{R}^{n\times m}$, the stochastic gradient is written as
\[
G_t \;\coloneqq\; \nabla_W \mathcal{L}(\Theta_t)\in\mathbb{R}^{n\times m},
\]
and reserve capital letters (e.g. $G_t,V_t,M_t,Q_t$) for matrix-valued quantities associated with such parameters.
For element wise or vector-shaped parameters bold notation is used; e.g., for $\boldsymbol{\theta}$ the corresponding gradient takes the form
\[
\boldsymbol{g}_t \;\coloneqq\; \nabla_{\boldsymbol{\theta}} \mathcal{L}(\Theta_t).
\]
For a matrix parameter $W$, the classical (heavy-ball) momentum maintains a matrix buffer $V_t$ by
$V_t = \beta V_{t-1} + G_t,$
with $\beta\in(0,1)$, typically $0.9$--$0.95$. The update is
$W_{t+1} = W_t - \eta V_t,$
where $\eta>0$ is the learning rate. The buffer $V_t$ is an exponential moving average of past gradients with characteristic memory length approximately $1/(1-\beta)$ steps.
A common Nesterov-style lookahead direction is
$M_t \;=\; G_t + \beta V_t,$
which replaces $V_t$ in the update $W_{t+1}=W_t-\eta M_t$.

AdamW maintains exponential moving averages of the first and second moments for (typically) vector- or element wise-shaped parameters $\boldsymbol{\theta}$:
\begin{align}
\boldsymbol{m}_t &= \beta_1 \boldsymbol{m}_{t-1} + (1-\beta_1)\boldsymbol{g}_t, \\
\boldsymbol{v}_t &= \beta_2 \boldsymbol{v}_{t-1} + (1-\beta_2)\boldsymbol{g}_t\odot \boldsymbol{g}_t,
\end{align}
where $\odot$ denotes multiplication by element. 
The bias-corrected moments are
$\hat{\boldsymbol{m}}_t=\boldsymbol{m}_t/(1-\beta_1^t)$ and
$\hat{\boldsymbol{v}}_t=\boldsymbol{v}_t/(1-\beta_2^t)$, resulting in the decoupled-weight-decay update
\begin{equation}
\label{eq:adamw}
\boldsymbol{\theta}_{t+1}
=
(1-\eta\lambda)\boldsymbol{\theta}_t
-\eta\,
\frac{\hat{\boldsymbol{m}}_t}{\sqrt{\hat{\boldsymbol{v}}_t}+\epsilon},
\end{equation}
where $\lambda$ is the weight decay coefficient and $\epsilon>0$ prevents division by zero.
AdamW is effective in handling coordinate-wise scale mismatches and sparse gradients, but for matrix parameters
$W\in\mathbb{R}^{n\times m}$ it treats the gradient as an element wise object and ignores its geometry as a linear map.

Orthogonalized momentum methods aim to exploit this matrix structure directly. For a gradient matrix $G\in\mathbb{R}^{n\times m}$, let
\begin{equation}
G = U\Sigma V^T = \sum_{i=1}^r \sigma_i \,\boldsymbol{u}_i \boldsymbol{v}_i^T,
\end{equation}
be a thin SVD, where $r=\min(n,m)$, $\sigma_1\ge \cdots \ge \sigma_r\ge 0$, and $\{\boldsymbol{u}_i\},\{\boldsymbol{v}_i\}$ are orthonormal singular vectors.
A gradient step $W\leftarrow W-\eta G$ advances by $\eta\sigma_i$ along each rank-one direction
$\boldsymbol{u}_i\boldsymbol{v}_i^T$.
When condition number $\kappa(G)=\sigma_1/\sigma_r$ is large, the update is highly anisotropic: large steps occur along dominant singular directions and small steps along trailing directions.
Empirically, gradient matrices in transformer training often exhibit a near-low rank structure, with a small number of singular values dominating the Frobenius norm.
For a transformer layer with input activations $X\in\mathbb{R}^{b\times d}$ and output gradients $\Delta\in\mathbb{R}^{b\times d}$, the gradient with respect to a weight matrix $W$ takes the form of
\[
\nabla_W\mathcal{L} \;=\; X^T\Delta,
\]
implying $\mathrm{rank}(\nabla_W\mathcal{L})\le b$ when $b<d$; even when $b\ge d$, the effective rank is often smaller due to correlations in the data.

\subsection{Muon}

Muon is designed for matrix-shaped parameters, typically large projection matrices in attention and MLP blocks.
Let $\Theta$ denote all model parameters and partition
\[
\Theta=\Theta_{\mathrm{mat}} \cup \Theta_{\mathrm{oth}},
\]
where $\Theta_{\mathrm{mat}}$ contains matrix-shaped weights (e.g., attention/MLP projection matrices, possibly including reshaped convolution kernels) and $\Theta_{\mathrm{oth}}$ contains all remaining parameters (embeddings, heads, biases, layer norm parameters).
The Muon updates are applied to $\Theta_{\mathrm{mat}}$, while the AdamW updates are applied to $\Theta_{\mathrm{oth}}$.

For a matrix parameter $W\in\mathbb{R}^{n\times m}$, let $G_t=\nabla_W \mathcal{L}(\Theta_t)$ be its stochastic gradient at step $t$.
Muon maintains a matrix momentum buffer $V_t$ and forms a Nesterov-style lookahead direction $M_t$:
\begin{equation}
\label{eq:muon-momentum}
V_t = \beta V_{t-1} + G_t,
\qquad
M_t = G_t + \beta V_t,
\end{equation}
where $\beta\in(0,1)$ (often $\beta\approx 0.95$).
Muon replaces the raw lookahead direction $M_t$ by an approximately orthonormal direction along the smaller dimension of $M_t$ for computational efficiency. 
Define $k=\min(n,m)$ and $d=\max(n,m)$; the convention transposes to work in $\mathbb{R}^{k\times d}$ when needed:
\begin{equation}
\label{eq:muon-transpose}
M \mapsfrom
\begin{cases}
M_t, & n\le m,\\
M_t^T, & n>m,
\end{cases}
\qquad \implies\qquad
M\in\mathbb{R}^{k\times d}.
\end{equation}
Muon aims to produce $Q_t\in\mathbb{R}^{k\times d}$ that represents the basis of $M$ with approximately orthonormal rows, $Q_tQ_t^T\approx I_k$  (and transposes back if needed to match the shape of $W$).

Muon-style methods formally target the closest row-orthonormal matrix to $M$ in the Frobenius norm under the Stiefel constraint $QQ^T=I_k$:
\begin{equation}
\label{eq:closest-orthogonal}
Q_\star \;\in\; \arg\min_{Q:\,QQ^T =I_k}\ \|M - Q\|_F,
\qquad Q\in\mathbb{R}^{k\times d},\ k\le d.
\end{equation}
When $M$ has the full row rank, the unique solution is the {polar factor}~\citep{higham2008functions}:
\begin{equation}
\label{eq:polar-factor}
Q_\star \;=\; (MM^T)^{-1/2}M.
\end{equation}
Equivalently, for a thin SVD $M=U\Sigma V^T$ with $U\in\mathbb{R}^{k\times k}$ and $V\in\mathbb{R}^{d\times k}$, one has $Q_\star=UV^T$.
By construction, $Q_\star Q_\star^T=I_k$, and among all row-orthonormal matrices, it is closest to $M$ in the Frobenius norm.

Muon avoids explicitly computing $(MM^T)^{-1/2}$ (or an SVD) by applying a small number of Newton--Schulz-like iterations that approximate the polar factor using dense matrix multiplications and element wise operations~\citep{jordan2024muon,liu2025muon_scalable}.
A standard implementation first normalizes $M$ (e.g., by Frobenius norm) so that the spectrum of $MM^T$ lies in a bounded range suitable for polynomial iterations. With
\[
X_0 \;=\; \frac{M}{\|M\|_F+\epsilon},
\]
Muon applies a fixed polynomial update
\begin{equation}
\label{eq:ns-poly}
X_{j+1} = aX_j + b(X_jX_j^T)X_j + c(X_jX_j^T)^2X_j,
\end{equation}
with coefficients $(a,b,c)=(3.4445,-4.7750,2.0315)$ and a fixed iteration count (typically five).
Each step forms the $k\times k$ Gram matrix $X_jX_j^T$ and multiplies it against $X_j$.
For $M=U\Sigma V^T$, the output can be written as $X_J=U\,\varphi_J(\Sigma)\,V^T$, where $\varphi_J$ is a scalar polynomial applied to the singular values; the singular vectors are preserved, while the singular values are pushed towards one.

Together, Muon: (i) constructs a matrix update direction $M$ (typically a momentum / lookahead direction), (ii) approximates $Q\approx (MM^T)^{-1/2}M$ through \eqref{eq:ns-poly}, and (iii) uses $Q$ as the update direction for matrix-shaped parameters.
Muon is typically used with decoupled weight decay:
\begin{equation}
\label{eq:muon-update}
W \leftarrow (1-\eta\lambda)W - \eta\, s(W)\, Q_t,
\end{equation}
where $\eta$ is the learning rate, $\lambda$ is the weight decay and $s(W)$ is a shape-dependent scaling (e.g., a function of $n$ and $m$) motivated by Muon scaling analyses~\citep{liu2025muon_scalable}.
The experiments use the scaling and weight decay from \citet{liu2025muon_scalable} for Muon and MUD.

\begin{algorithm}[t]
\caption{Muon update for a matrix-shaped parameter (AdamW is applied to $\Theta_{\mathrm{oth}}$ separately)}
\label{alg:muon}
\begin{algorithmic}[1]
\Require Matrix parameter $W\in\mathbb{R}^{n\times m}$, gradient $G_t$, momentum buffer $V_{t-1}$, lr $\eta$, weight decay $\lambda$, momentum $\beta$, NS steps $S$, scaling $s(W)$, $\varepsilon>0$
\State $V_t \leftarrow \beta V_{t-1} + G_t$ \Comment{matrix momentum}
\State $M_t \leftarrow G_t + \beta V_t$ \Comment{Nesterov lookahead}
\If{$n>m$} $M_t \leftarrow M_t^T$ \EndIf \Comment{work in $\mathbb{R}^{k\times d}$ with $k=\min(n,m)$}
\State $X \leftarrow M_t/(\|M_t\|_F+\varepsilon)$ \Comment{Frobenius normalization for NS stability}
\For{$j=1,\dots,S$}
    \State $A \leftarrow XX^T$ \Comment{$A\in\mathbb{R}^{k\times k}$}
    \State $X \leftarrow aX + b(AX) + c(A(AX))$ \Comment{implements \eqref{eq:ns-poly}}
\EndFor
\State $Q_t \leftarrow X$
\If{$n>m$} $Q_t \leftarrow Q_t^T$ \EndIf \Comment{undo transpose}
\State $W \leftarrow (1-\eta\lambda)W - \eta\, s(W)\, Q_t$ \Comment{decoupled weight decay + scaled step}
\State \Return $W, V_t$
\end{algorithmic}
\end{algorithm}

\section{Approximate MomentUm Decorrelation (MUD)}\label{sec:mud}

\subsection{Orthogonalization and decorrelation}

Orthogonalization is a classical theme in numerical linear algebra. Given a collection of vectors arranged as columns of a matrix $A\in\mathbb{R}^{d\times k}$ (with $d\ge k$), Gram--Schmidt produces a QR factorization $A=QR$ with $Q^T  Q=I_k$ and $R$ upper triangular. Classical Gram--Schmidt (CGS) and modified Gram--Schmidt (MGS) differ in the order in which projections are applied; in finite precision, MGS is typically more stable and is a standard workhorse for constructing orthonormal bases~\citep{ruhe1983gram_schmidt,golub2013matrix}. A second classical method is triangular orthogonalization via the Gram matrix. Defining $G=A^TA$ (the normal equation matrix), one may compute
\begin{equation}
L = \mathrm{chol}(G), 
\qquad
Q = A L^{-T},
\label{eq:cholqr}
\end{equation}
where chol$(G)$ denotes the Cholesky factorization $G=LL^T$, returning the lower triangular factor $L$. This yields $Q^TQ=I_k$ in exact arithmetic when $A$ has the full column rank, and is typically called \emph{CholeskyQR}. This approach is computationally appealing because it reduces orthogonalization to forming a Gram matrix, a Cholesky factorization, and applying a triangular solution. However, \eqref{eq:cholqr} can be numerically delicate. The formation of $A^T  A$ squares the condition number, and the resulting Cholesky factorization may be unstable or fail when $A$ is ill-conditioned or nearly rank-deficient. This classical stability consideration is directly relevant to optimizer update matrices, which can be highly correlated and effectively low rank early in training. Stabilized variants, e.g., reorthogonalization or CholeskyQR2, have been studied ~\citep{golub2013matrix,fukaya2020shifted}. 

The Muon and triangular orthogonalization are both realizations of \emph{whitening} or \emph{decorrelation} \citep{kessy2018optimal}. Given a matrix $M\in\mathbb{R}^{k\times d}$, define the row Gram matrix
\begin{equation}
\calG = MM^T  \in \mathbb{R}^{k\times k}.
\end{equation}
Note, one can also think of this as a form of covariance matrix over matrix variable $M$ by subtracting row means and rescaling, but for the purposes of this analysis, the Gram captures the correlations relevant to the update geometry. Consider a linear transform on the left $W\in\mathbb{R}^{k\times k}$, and define $Q = WM$. The matrix $Q$ is said to be \emph{whitened} in rows if $QQ^T  = I_k$, i.e.,
\begin{equation}
WM(M^T  W^T ) = I_k
\quad \Longleftrightarrow \quad
W \calG W^T  = I_k.
\label{eq:whitening-condition}
\end{equation}
A weaker goal is \emph{decorrelation}, which requires $W \calG W^T$ to be diagonal (not necessarily identity). A whitening operator is not unique, and there are infinitely many valid whitening transforms. Two canonical choices are:
\begin{enumerate}
    \item \ul{Symmetric (polar) whitening;} Taking $W = \calG^{-1/2}$ yields
    \begin{equation}
        Q_{\mathrm{polar}} = \calG^{-1/2}M,
    \end{equation}
    and $Q_{\mathrm{polar}}Q_{\mathrm{polar}}^T  = I_k$ in exact arithmetic. This is precisely the polar-factor solution \eqref{eq:polar-factor} and corresponds to the Frobenius-closest row-orthonormal approximation.

    \item \ul{Cholesky (triangular) whitening:} If $\calG = L_{\mathrm{chol}}L_{\mathrm{chol}}^T $ is a Cholesky factorization with $L_{\mathrm{chol}}$ lower triangular, then taking $W=L_{\mathrm{chol}}^{-1}$ yields
    \begin{equation}\label{eq:chol-whiten}
    Q_{\mathrm{chol}} = L_{\mathrm{chol}}^{-1}M,
    \qquad
    Q_{\mathrm{chol}}Q_{\mathrm{chol}}^T  = I_k
    \quad \text{(exact arithmetic)}.
    \end{equation}
    This is the triangular analog of whitening and is the basis for CholeskyQR-like orthogonalization.
\end{enumerate}
These two forms achieve the same whitening condition \eqref{eq:whitening-condition} but differ structurally: $\calG^{-1/2}$ is symmetric and rotation-invariant, whereas $L_{\mathrm{chol}}^{-1}$ is triangular and depends on ordering. The analytic form of the polar whitening is built from an SVD, which is more expensive than a Cholesky decomposition, but also stable and robust to ill-conditioned or singular matrices. Either way, in practice computing full decompositions tends to be prohibitively expensive, at least to be competitive with other state of the art methods. 

\subsection{MUD: a fast triangular approximation to whitening}

Recall that Muon approximates symmetric decorrelation via five Newton--Schulz iterations. Numerical evidence demonstrates that Muon matrices are often far from orthogonal, and repeated iterations to achieve true orthogonality yield minimal improvement in convergence of the outer machine learning optimization, e.g. \cite{shulgin2025beyondideal,liu2025muon_scalable}. Motivated by this and the structural simplicity of the triangular whitening, as well as its connection to Gram-Schmidt orthogonalization methods, the present work proposes cheap decorrelation algorithms based on triangular whitening. The objective is to obtain most of the practical benefit of triangular whitening while avoiding the numerical fragility and overhead cost of exact Cholesky factorization on dense, potentially ill-conditioned Gram matrices arising during training. To this end, the analysis adopts a perspective from classical numerical analysis: Gram--Schmidt variants can be viewed as stationary iterations (Jacobi/Gauss-Seidel) applied to normal-equation-like structures, and MGS in particular has a Gauss--Seidel flavor due to its sequential use of the most recently computed orthonormal vectors~\citep{ruhe1983gram_schmidt,thomas2022igs_gmres}. Indeed, recent work exploits this viewpoint to design iterated GS procedures with reduced synchronization in Krylov methods~\citep{thomas2022igs_gmres}.

The present setting does not maintain a growing orthonormal basis. Instead, at each optimization step, a matrix update direction $M$ is formed and a \emph{single} sequential linear preconditioner derived from the current Gram matrix is applied. Rather than using the true Cholesky factor as in \eqref{eq:chol-whiten}, $L_{\mathrm{chol}}$ is replaced with the lower triangular part of the Gram matrix,
\begin{equation}
    L \coloneqq \mathrm{tril}(\calG) \qquad\implies\qquad Q = L^{-1}M.
\end{equation}
This choice is motivated by the Gauss--Seidel viewpoint: lower-triangular structure corresponds to a forward sequential preconditioner that incorporates ``past'' correlations while ignoring ``future'' ones. This is wrapped with a row-normalization for stability, yielding an approximate MomentUm Decorrelation (MUD) nonlinear fixed-point iteration. This one-pass variant is denoted {MUD1}, and is applied to the matrix momentum/projection directions $M$ used in the optimizer. Repetition of MUD iterations as a fixed-point operator can further reduce correlation, denoted {MUDp} for $p$ passes. The general algorithm {MUDp} is provided in \Cref{alg:mud}. The following subsection proves local quadratic convergence of the MUD iteration to a decorrelated Gram matrix; \Cref{sec:experiments} demonstrates empirically that this one-pass surrogate captures most of the practical benefit of decorrelation while substantially reducing the computational cost relative to multi-sweep Newton--Schulz iterations. 

\begin{algorithm}[t]
\caption{MUDp: triangular Gram preconditioning for row-orthonormalization}
\label{alg:mud}
\begin{algorithmic}[1]
\Require Matrix update direction $M\in\mathbb{R}^{n\times m}$, passes $p\ge 1$, stability $\varepsilon>0$
\Ensure $Q$ with $QQ^T\approx I_k$ along the smaller dimension (transpose back if needed)
\State $k \gets \min(n,m)$,\quad $d \gets \max(n,m)$
\If{$n>m$} $M \gets M^T$ \EndIf \Comment{now $M\in\mathbb{R}^{k\times d}$}
\For{$j=1,\dots,p$}
    \State $\boldsymbol{r} \gets \big(\|M_{1,:}\|_2,\dots,\|M_{k,:}\|_2\big)$ \Comment{row norms of $M$, $O(kd)$ reduction}
    \State $Q \gets \mathrm{diag}\!\big((\boldsymbol{r}+\varepsilon)^{-1}\big)\,M$ \Comment{row-normalize}
    \State $\calG \gets QQ^T$ \Comment{Gram matrix in $\mathbb{R}^{k\times k}$ (GEMM)}
    \State $T \gets \mathrm{tril}(\calG)$ \Comment{lower-triangular Gram approximation}
    \State $Q \gets T^{-1}Q$ \Comment{forward triangular solve (TRSM)}
    \State $\boldsymbol{r} \gets \big(\|Q_{1,:}\|_2,\dots,\|Q_{k,:}\|_2\big)$ \Comment{row norms, $O(kd)$ reduction}
    \State $Q \gets \mathrm{diag}\!\big((\boldsymbol{r}+\varepsilon)^{-1}\big)\,Q$ \Comment{row-normalize again}
\EndFor
\If{$n>m$} $Q \gets Q^T$ \EndIf
\State \Return $Q$
\end{algorithmic}
\end{algorithm}

\textbf{Complexity.}
Let $k=\min(n,m)$ and $d=\max(n,m)$ after the Muon/MUD transpose convention so that $M\in\mathbb{R}^{k\times d}$ and the Gram matrices are $k\times k$.
A single Gram formation $XX^T$ costs approximately $2k^2d$ FLOPs (one $k\times d$ by $d\times k$ GEMM).
In Muon, each Newton--Schulz {quintic} step is
\[
X_{j+1}=aX_j+b(AX_j)+c\big(A(AX_j)\big),\qquad A=X_jX_j^T,
\]
requires forming $A$ (one Gram) and applying $A$ to a $k\times d$ matrix twice (two $k\times k$ by $k\times d$ GEMMs), for a total of roughly
$2k^2d + 2(2k^2d) \approx 6k^2d$ FLOPs per iteration.
With $S=5$ iterations, Muon performs five Gram formations and ten Gram applications, i.e., about $30k^2d$ FLOPs in the dominant dense-linear-algebra kernels.

In contrast, {MUD1} performs a Gram formation $G=QQ^T$ (about $2k^2d$ FLOPs), followed by a triangular solve $Q\leftarrow T^{-1}Q$ with $T=\mathrm{tril}(G)$.
Counting multiply-adds, a TRSM applying a $k\times k$ triangular factor to a $k\times d$ right-hand side costs approximately $\tfrac{1}{2}k^2d$ FLOPs, plus two row-norm reductions at cost $\Theta(kd)$.
Thus, {MUD1} has dominant cost $\approx 2.5k^2d$ FLOPs, and {MUD2} costs $\approx 5k^2d$ FLOPs.
Compared to Muon5, this corresponds to a reduction in operation-count of roughly $12\times$ for {MUD1} (and $6\times$ for {MUD2}), which is summarized in \Cref{tab:muon-mud-cost}.
In wall-clock terms on GPUs, realized speedups are typically smaller than FLOP ratios because triangular solves (TRSM) generally achieve lower throughput than highly optimized GEMM kernels; nonetheless, MUD reduces optimizer overhead substantially while retaining most of the practical benefit of decorrelating matrix update directions.

\begin{table}[!h]
\centering
\begin{tabular}{lcccc}
\toprule
Method & \#Grams & \#Applies & \#TRSM & Leading-order FLOPs \\
\midrule
Muon5 (quintic) & $5$ & $10$ & $0$ & $\approx 30\,k^2d$ \\
MUD1            & $1$ & $0$  & $1$ & $\approx 2.5\,k^2d$ \\
MUD2            & $2$ & $0$  & $2$ & $\approx 5\,k^2d$ \\
\bottomrule
\end{tabular}
\caption{Dominant dense-linear-algebra cost per parameter update (after transposing so $M\in\mathbb{R}^{k\times d}$, $k=\min(n,m)$, $d=\max(n,m)$). ``Gram'' denotes $X X^T$, $\approx 2k^2d$ FLOPs; ``Apply'' denotes multiplying a $k\times k$ matrix into a $k\times d$ matrix, $\approx 2k^2d$ FLOPs; TRSM denotes solving a $k\times k$ triangular system with $k\times d$ right-hand side, $\approx \tfrac{1}{2}k^2d$ FLOPs (counting multiply-adds). Constants ignore lower-order reductions (row norms) and scalar operations.}
\label{tab:muon-mud-cost}
\end{table}

\subsection{Analysis}

This subsection provides a local convergence analysis of the MUD fixed-point iteration, carried out in the Gram matrix space where the objective is a diagonally decorrelated Gram matrix. Define the correlation normalization as the symmetric scaling of $\calG$ to unit diagonal,
\begin{equation}\label{eq:corr-norm}
    \mathrm{Corr}(\calG) = D^{-1/2}\calG D^{-1/2},
\end{equation}
for $D \coloneqq\mathrm{diag}(\calG)$. Then MUD corresponds to the nonlinear fixed-point map
\begin{equation}
    \calG \mapsfrom \calM{\calG} \coloneqq \mathrm{Corr}\Big( \mathrm{tril}(\mathrm{Corr}(G))^{-1}\mathrm{Corr}(G)\mathrm{tril}(\mathrm{Corr}(G))^{-T}\Big).
\end{equation}
For convenience, combine the normalization of the first row of $M$ with a preprocessing step, such that the initial Gram matrix takes the form $\mathrm{diag}(\calG_0) = \mathrm{diag}(MM^T) = I$. Then $\mathrm{Corr}(G_0) = G_0$, and all subsequent Gram matrices remain normalized to the unit diagonal. The MUD fixed-point iterations in Gram matrix space then take the simplified form
\begin{equation}\label{eq:gram-update}
    \calG_{t+1} = \mathrm{Corr}\Big( \mathrm{tril}(\calG_t)^{-1}\calG_t\mathrm{tril}(\calG_t)^{-T}\Big).
\end{equation}

\Cref{prop:fp} establishes that any row-orthonormal matrix is a fixed point of the MUD iteration. The convergence to a true whitening operator implies that the eigenvalues of the Gram matrix converge towards unity, the condition $\calG = I$ holding precisely when all eigenvalues equal one. \Cref{prop:sgs_equivalence} relates MUD to classical symmetric Gauss-Seidel fixed-point iterations. Local quadratic convergence is then proved in \Cref{th:conv}, and a spectral corollary is derived in \Cref{cor:eigs_cluster}.

\begin{proposition}[Orthonormal fixed point]\label{prop:fp}
    The Gram matrix induced by any orthonormal row matrix $Q$, where $\calG = QQ^T = I_k$, is a fixed point of $\calM$. 
\end{proposition}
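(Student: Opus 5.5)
The plan is to substitute $\calG = I_k$ directly into the Gram-space map \eqref{eq:gram-update} (equivalently the unsimplified map $\calM$) and check that each operation returns the identity. Because $\calM$ is a composition of $\mathrm{Corr}$, $\mathrm{tril}$, a triangular inversion, and a congruence, it is enough to track $I_k$ through these operations one at a time.

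\emph{First normalization.} Since $\mathrm{diag}(I_k) = I_k$, we have $D = I_k$ in \eqref{eq:corr-norm}, so $\mathrm{Corr}(I_k) = I_k$. Hence the inner correlation normalization leaves $\calG$ unchanged, consistent with the preprocessing convention that $\mathrm{diag}(\calG_0)=I$.

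\emph{Triangular step.} The identity is already lower triangular, so $\mathrm{tril}(I_k) = I_k$, which is invertible with inverse $I_k$. The congruence then gives $I_k^{-1} I_k I_k^{-T} = I_k$.

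\emph{Outer normalization.} Applying $\mathrm{Corr}$ to $I_k$ once more returns $I_k$, so $\calM(I_k) = I_k$.

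It is also worth recording the matrix-level version, which is the statement about $Q$. If $QQ^T = I_k$, then every row of $Q$ has unit norm, so the row normalization in \Cref{alg:mud} is trivial (exactly when $\varepsilon=0$, and up to the factor $(1+\varepsilon)^{-1}$ otherwise). The triangular factor is $T = \mathrm{tril}(I_k) = I_k$, so the solve returns $Q$ itself, and the second normalization is again trivial. Thus $Q$ itself, and not just its Gram matrix, is preserved.

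There is no real obstacle here. The only points needing care are that $\mathrm{tril}(I_k)$ is invertible, so the map is well-defined at $I_k$, and that one must state the convention $\varepsilon = 0$ (exact arithmetic) for the row-level statement to be an exact fixed point.
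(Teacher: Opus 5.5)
Your proposal is correct and matches the paper's proof: the paper likewise notes that $\mathrm{Corr}(I)=\mathrm{tril}(I)=I$, so each stage of $\calM$ returns the identity. Your added matrix-level remark about \Cref{alg:mud} goes slightly beyond what the paper states but is consistent with it, provided you keep the $\varepsilon=0$ caveat you already give.
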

\begin{proof}
For $\calG= I$, observe that $\cal{G} = \mathrm{Corr}(\calG) = \mathrm{tril}(\calG) = I$. It follows immediately that $\calM(\calG) = \calM(I) = I$. 
\end{proof}

\begin{proposition}[Inner step as symmetric Gauss Seidel preconditioning]
\label{prop:sgs_equivalence}
Let $\calG\in\mathbb{R}^{k\times k}$ be SPD with unit diagonal, and write $\calG = I + \hat{L} + \hat{L}^T,$ where $L$ is strictly lower triangular. Define the lower-triangular factor $T := I+L$ and the Gram transform in the inner-step of MUD $\calB \coloneqq T^{-1}\calG_0T^{-T}$. Then $\calB$ is SPD. Moreover, the symmetric Gauss-Seidel (SGS) preconditioner for $\calG$ takes the form $M := T T^T$, and
\[
\sigma(\calB) = \sigma(M^{-1}\calG),
\]
that is, the eigenvalues of the Gram update in the inner-step $\calB$ \eqref{eq:calB} coincide with those of a Gram matrix preconditioned with SGS.
\end{proposition}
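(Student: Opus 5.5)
The plan is to prove the two claims separately: first that $\calB$ is SPD, then the spectral identity. Throughout I read the statement as intended: the strictly lower part is $L=\hat L$, and $\calG_0$ in the definition of $\calB$ is the matrix $\calG$ under consideration, so $\calB=T^{-1}\calG T^{-T}$ with $T=I+L=\mathrm{tril}(\calG)$ (using that $\calG$ has unit diagonal).

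\textbf{Step 1: $T$ is invertible and $\calB$ is SPD.} Since $\calG$ has unit diagonal, $T=I+L$ is unit lower triangular. Hence $\det T=1$, so $T$ is invertible and the inner step of MUD is well defined. Symmetry is immediate, since $\calB^T=(T^{-T})^T\calG^T(T^{-1})^T=T^{-1}\calG T^{-T}=\calB$. For positive definiteness, fix $x\neq 0$ and set $y=T^{-T}x$. Then $y\neq 0$ because $T^{-T}$ is nonsingular, and
\[
x^T\calB x = y^T\calG y > 0 .
\]
Thus $\calB$ is a congruence transform of the SPD matrix $\calG$, and Sylvester's law of inertia confirms it stays SPD.

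\textbf{Step 2: identify $TT^T$ as the SGS preconditioner.} Write $\calG=D+L+L^T$ with $D=\mathrm{diag}(\calG)=I$. The standard symmetric Gauss--Seidel preconditioner is
\[
M_{\mathrm{SGS}}=(D+L)D^{-1}(D+L)^T .
\]
With $D=I$ this is exactly $TT^T$, the matrix called $M$ in the statement. I will note that this $M$ is not the direction matrix $M$ used elsewhere in the paper.

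\textbf{Step 3: the spectral identity.} I will show that $M^{-1}\calG$ and $\calB$ are similar. We have $M^{-1}\calG=T^{-T}T^{-1}\calG$, and conjugating by $T^T$ gives
\[
T^{T}\,(M^{-1}\calG)\,T^{-T} = T^{-1}\calG T^{-T} = \calB .
\]
Similar matrices share eigenvalues, so $\sigma(\calB)=\sigma(M^{-1}\calG)$. Because $\calB$ is SPD, these eigenvalues are also real and positive.

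There is no real obstacle here; every step is a one-line computation. The only care needed is the bookkeeping about notation ($\hat L$ versus $L$, $\calG_0$ versus $\calG$, and the SGS matrix versus the direction $M$), and making explicit that the unit diagonal is what makes $D^{-1}$ disappear from the SGS formula.
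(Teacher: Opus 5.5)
Your proof is correct and takes the same route as the paper: $\calB$ is SPD as a congruence transform of $\calG$ by the nonsingular $T^{-1}$, and the spectral identity comes from the similarity $\calB = T^{-1}\calG T^{-T} \sim T^{-T}T^{-1}\calG = (TT^T)^{-1}\calG$. Your additional bookkeeping, namely $\det T = 1$ from the unit diagonal and $D=I$ reducing $(D+L)D^{-1}(D+L)^T$ to $TT^T$, makes explicit what the paper leaves implicit.
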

\begin{proof}
Because$\calB$ is a congruence transform of $G$ by the nonsingular matrix $T^{-1}$, it is SPD. For the spectral equivalence, similarity holds by
\[
\calB = T^{-1}\calG T^{-T} \sim T^{-T}T^{-1}\calG = (T T^T)^{-1}\calG =
M^{-1}\calG.
\]
\end{proof}

\begin{theorem}[Local quadratic convergence]\label{th:conv}
    Fix a vector-induced operator norm $\|\cdot\|$. There exist constants $\epsilon> 0$, $C>0$ such that if $\calG_0 = MM^T =  I + E_0$ with $\mathrm{diag}(\calG_0) = I$ and $\|E_0\| \leq \epsilon$, then the MUD iterates satisfy $\|E_{t+1}\| \leq C\|E_t\|^2$ for all $t$, and hence
    \begin{equation}
        \calG_{t} = I + \mathcal{O}(\|E_0\|^{2^{t}}). 
    \end{equation}
\end{theorem}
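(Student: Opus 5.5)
Write $\calG_t = I + E_t$, where $E_t$ is symmetric with zero diagonal (every iterate has unit diagonal because of the outer $\mathrm{Corr}$). Split $E_t = L_t + L_t^T$ with $L_t$ strictly lower triangular, so that $\mathrm{tril}(\calG_t) = T_t := I + L_t$. The aim is to expand the inner-step matrix $\calB_t = T_t^{-1}\calG_t T_t^{-T}$ of \Cref{prop:sgs_equivalence} to second order in $E_t$, and then show that the correlation normalization adds nothing at first order.

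\textbf{Step 1: the first-order terms cancel.} Use the exact identity
\[
\calG_t = T_t T_t^T - L_t L_t^T ,
\]
which holds because $(I+L_t)(I+L_t^T) = I + L_t + L_t^T + L_tL_t^T$. This gives
\[
\calB_t = I - T_t^{-1}L_tL_t^TT_t^{-T} = I - F_t, \qquad F_t := (T_t^{-1}L_t)(T_t^{-1}L_t)^T .
\]
Every vector-induced norm of a strictly triangular part is controlled by the norm of the matrix up to a dimension constant: $\|L_t\| \le c_k\|E_t\|$. With a Neumann series, $\|T_t^{-1}\| \le 1/(1-c_k\|E_t\|) \le 2$ once $\|E_t\| \le 1/(2c_k)$. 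Hence $\|F_t\| \le 4c_k^2 c'\|E_t\|^2$, where $c'$ accounts for the transpose, since $\|X^T\|$ is bounded by a dimension constant times $\|X\|$. So $\calB_t = I + \mathcal{O}(\|E_t\|^2)$ with an explicit constant. This is the SGS-preconditioning picture: $T_tT_t^T$ agrees with $\calG_t$ up to the quadratic term $L_tL_t^T$.

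\textbf{Step 2: the normalization is stable.} Let $D_t = \mathrm{diag}(\calB_t) = I - \mathrm{diag}(F_t)$. The diagonal entries are bounded by $\|F_t\|$ up to a constant, so $D_t^{-1/2} = I + \Delta_t$ with $\|\Delta_t\| \le c''\|F_t\|$ whenever $\|F_t\| \le 1/2$. Then
\[
\calG_{t+1} = (I+\Delta_t)(I-F_t)(I+\Delta_t) = I + E_{t+1},
\]
and $\|E_{t+1}\| \le C\|E_t\|^2$, with $C$ depending only on $k$ and the chosen norm. The diagonal of $E_{t+1}$ is exactly zero, so the same structure carries to the next step.

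\textbf{Step 3: induction.} Choose $\epsilon \le \min\{1/(2c_k),\, 1/(2C)\}$, and small enough that $\|F_t\| \le 1/2$. Then $C\epsilon \le 1/2$, so $\|E_t\| \le \epsilon$ implies $\|E_{t+1}\| \le C\|E_t\|^2 \le \|E_t\|/2 \le \epsilon$. This keeps every iterate inside the region where Steps 1 and 2 apply, and positive definiteness also persists since $\|E_t\| < 1$. Unrolling the recursion gives
\[
C\|E_t\| \le (C\|E_0\|)^{2^t},
\]
that is, $\calG_t = I + \mathcal{O}(\|E_0\|^{2^t})$, where the hidden constant is $C^{-1}(C\|E_0\|)^{2^t}/\|E_0\|^{2^t} = C^{2^t-1}$. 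Strictly, the clean form $\mathcal{O}(\|E_0\|^{2^t})$ should be read as $\|E_t\| \le C^{-1}(C\|E_0\|)^{2^t}$, and this is the form I will state.

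\textbf{Main obstacle.} The key point is the cancellation in Step 1. A naive first-order expansion of $T_t^{-1}\calG_tT_t^{-T}$ leaves the term $L_t^T - L_t$, and the structure only becomes clear through the factorization $\calG_t = T_tT_t^T - L_tL_t^T$. The other delicate part is tracking the norm-dependent constants, since triangular truncation and transposition are not contractive in general operator norms. I will absorb these costs into $c_k$ and $C$, using the equivalence of norms in finite dimension.
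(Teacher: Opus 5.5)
Your proposal is correct and follows the paper's proof essentially step for step. Both arguments use the identity $\calG_t = T_tT_t^T - L_tL_t^T$ to get $\calB_t = I - (T_t^{-1}L_t)(T_t^{-1}L_t)^T$, bound $T_t^{-1}$ by a Neumann series, show that the diagonal rescaling in $\mathrm{Corr}$ costs only a constant factor, and unroll $\|E_{t+1}\|\le C\|E_t\|^2$ to $\|E_t\|\le C^{2^t-1}\|E_0\|^{2^t}$. Two of your additions are worth keeping. Requiring $C\epsilon\le 1/2$, so every iterate stays in the region where the one-step bound holds, closes an induction the paper never checks. Your dimension constant for $\|X^T\|$ in non-Euclidean operator norms repairs the paper's bound of $\|T^{-1}\hat{L}\hat{L}^TT^{-T}\|$ by $\|T^{-1}\hat{L}\|^2$, which is only valid in the $2$-norm.
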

\begin{proof}
Let $\calG_0 \coloneqq  I + E_0$ denote the initial Gram matrix, where $E_0=E_0^T$ and diag$(E_0) = \mathbf{0}$. Let $\hat{L}$ denote the strictly lower triangular part of $\calG_0$, wherein $E_0 = \hat{L} + \hat{L}^T$, and $\|\hat{L}\| \leq c_E\|E_0\|$ for some $c_E$ depending on the chosen  (e.g., $c_E=1$ for the $\ell^1$- or $\ell^\infty$-norms). Define $T = \mathrm{tril}(\calG_0) = I + \hat{L}$ and
\begin{equation}
    \calB \coloneqq \mathrm{tril}(\calG_0)^{-1}\calG_0\mathrm{tril}(\calG_0)^{-T}
    = T^{-1}\calG_0T^{-T}.\label{eq:calB}
\end{equation}
Also note the identity $TT^T = \calG_0 + \hat{L}\hat{L}^T$. Plugging this into \eqref{eq:calB} yields 
\begin{align*}
    \calB = T^{-1}(TT^T - \hat{L}\hat{L}^T)T^{-T} = I - T^{-1}\hat{L}\hat{L}^TT^{-T}.
\end{align*}
Let $\|\cdot\|$ be a submultiplicative operator norm. The perturbation from identity then is bounded as
\begin{align*}
    \|I - \calB\| &=  \|I - (I + \hat{L})^{-1}(I + \hat{L} + \hat{L}^T)(I + \hat{L}^T)^{-1}\|
    \leq \|T^{-1}\hat{L}\hat{L}^TT^{-T}\| \\
    &\qquad \leq \|T^{-1}\hat{L}\|^2 
     = \Big\|\sum_{\ell=1}^{k-1} (-1)^{\ell-1}\hat{L}^\ell \Big\|^2 
    \leq \Big( \sum_{\ell=1}^{k-1} \|\hat{L}\|^\ell \Big)^2 
    < \left( \frac{\|\hat{L}\|}{1 - \|\hat{L}\|} \right)^2 \qquad \textnormal{for }\|\hat{L}\|<1.
\end{align*}

Now, let $D = \mathrm{diag}(\calB)$. Then
\[ 
    \|\mathrm{Corr}(\calB) - I\| = \|D^{-1/2}(\calB - D)D^{-1/2}\| 
    \leq \|D^{-1/2}\|^2\|\calB - D\| \leq \|D^{-1}\|\left( \|\calB - I\| + \|D - I\|\right).
\]
For any vector-induced operator norm, $\|D-I\| = \max_i |D_{ii}-1| = \max_i|\calB_{ii}-1| \leq \|\calB - I\|$, and thus $(\|\calB - I\| + \|D - I\|) \leq 2\|\calB-I\|$. For SPD $\calG$, the matrix $\calB = T^{-1}\calG T^{-T}$ is also SPD and thus $\calB_{ii} > 0$ for all $i$. In addition, $\calB = I - T^{-1}\hat{L}\hat{L}^TT^{-T}$ implies that for all $i$
\[
    \calB_{ii} = 1 - \mathrm{diag}(T^{-1}\hat{L}\hat{L}^TT^{-T})_{ii} \geq 1 - \|T^{-1}\hat{L}\hat{L}^TT^{-T}\|  \geq 1 -  \left( \frac{\|\hat{L}\|}{1 - \|\hat{L}\|} \right)^2.
\]
Then
\[
    \|D^{-1}\| = \frac{1}{\min_i \calB_{ii}} \leq \frac{1}{1 - \left( \frac{\|\hat{L}\|}{1 - \|\hat{L}\|} \right)^2}.
\]
Combining the above results, for $\calG_1 \coloneqq I + E_1$ one obtains
\[
    \|E_1\| = \|\mathrm{Corr}(\calB) - I\| \leq 2\|D^{-1}\|\|I-\calB\| < \frac{2\left( \frac{\|\hat{L}\|}{1 - \|\hat{L}\|} \right)^2}{1 - \left( \frac{\|\hat{L}\|}{1 - \|\hat{L}\|} \right)^2} = \frac{2\|\hat{L}\|^2}{1-2\|\hat{L}\|}.
\]
Choose $\epsilon$ such that $\|\hat{L}\| \leq c_E\|E_0\|\leq 1/3$. Then
\begin{equation}
    \|E_1\| < \frac{2c_E^2\|E_0\|^2}{1-2c_E\|E_0\|} \leq 6c_E^2\|E_0\|^2.
\end{equation}
It follows that $\|E_{t}\| \leq C\|E_{t-1}\|^2 \implies \|E_{t}\| \leq C^{2^{t}-1}\|E_0\|^{2^t}$. Writing in terms of $\calG_t = I + E_t$ completes the proof.
\end{proof}

\begin{corollary}[Local eigenvalue clustering and condition number improvement]
\label{cor:eigs_cluster}
Let $\calG_1$ denote the Gram matrix after one MUD pass, and suppose that the assumptions of \Cref{th:conv} hold, so that $\|\calG_1 - I\|_2 \le C \|\calG_0 - I\|_2^2.$
Let $\varepsilon := \|\calG_0 - I\|_2$. Then every eigenvalue of $\calG_1$ lies in the interval
\[
\sigma(\calG_1) \subset \bigl[\,1 - C\varepsilon^2,\; 1 + C\varepsilon^2\,\bigr],
\]
and in particular the condition number satisfies
\[
\kappa_2(\calG_1) \le \frac{1 + C\varepsilon^2}{1 - C\varepsilon^2}
\qquad \textnormal{for } C\varepsilon^2 < 1.
\]
\end{corollary}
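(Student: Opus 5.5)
The corollary follows from \Cref{th:conv} together with a standard eigenvalue perturbation bound for symmetric matrices. The plan is first to confirm that $\calG_1$ is symmetric, and in fact SPD. By \Cref{prop:sgs_equivalence}, $\calB = T^{-1}\calG_0 T^{-T}$ is a congruence of the SPD matrix $\calG_0$, so it is SPD. Applying $\mathrm{Corr}$ is a further congruence by the positive diagonal matrix $D^{-1/2}$, with $D=\mathrm{diag}(\calB)$ having positive entries, so $\calG_1 = \mathrm{Corr}(\calB)$ is again SPD. Consequently its eigenvalues are real and positive, and $\kappa_2(\calG_1) = \lambda_{\max}/\lambda_{\min}$.

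Next we write $\calG_1 = I + E_1$ with $E_1$ symmetric. We invoke \Cref{th:conv} with the operator norm chosen to be the spectral norm $\|\cdot\|_2$. This is legitimate because the theorem holds for any vector-induced norm, and the constant $C$ is the one taken in the hypothesis. This gives $\|E_1\|_2 \le C\varepsilon^2$. Since $E_1$ is symmetric, its spectral norm equals its spectral radius, so every eigenvalue $\mu$ of $E_1$ satisfies $|\mu| \le C\varepsilon^2$. The eigenvalues of $\calG_1$ are exactly $1+\mu$, because $I+E_1$ shares eigenvectors with $E_1$. This yields the inclusion $\sigma(\calG_1)\subset[1-C\varepsilon^2,\,1+C\varepsilon^2]$. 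One could equivalently cite Weyl's inequality applied to the perturbation $E_1$ of $I$.

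For the condition number, assume $C\varepsilon^2<1$. Then $\lambda_{\min}(\calG_1)\ge 1-C\varepsilon^2>0$ and $\lambda_{\max}(\calG_1)\le 1+C\varepsilon^2$. Taking the ratio gives the stated bound on $\kappa_2(\calG_1)$.

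The only step that needs care is the first one: ensuring that the $\|\cdot\|_2$ version of \Cref{th:conv} applies with a consistent constant, and that the hypothesis $\|\hat L\|_2 \le c_E\|E_0\|_2$ holds. In the spectral norm, the constant $c_E$ relating the strictly lower triangular part to $E_0$ is not $1$. It can grow like $O(\log k)$, since triangular truncation is unbounded as $k$ grows. That growth is absorbed into $C$ and the smallness threshold, and the corollary simply assumes the resulting inequality. So I would state explicitly that $C$ depends on $k$ through $c_E$, and otherwise the argument is immediate.
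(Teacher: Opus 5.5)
Your proof is correct and is essentially the paper's argument: the paper applies Weyl's inequality to get $|\lambda_i(\calG_1)-1|\le\|\calG_1-I\|_2\le C\varepsilon^2$ and then reads off $\lambda_{\max}(\calG_1)\le 1+C\varepsilon^2$ and $\lambda_{\min}(\calG_1)\ge 1-C\varepsilon^2$, which is exactly your spectral-radius argument for the symmetric perturbation $E_1$. Your explicit check that $\calG_1$ is symmetric (indeed SPD) fills in a point the paper leaves implicit, and your caveat that the spectral-norm constant $c_E$ (hence $C$) grows with $k$ is accurate but does not affect the corollary, since the bound $\|\calG_1-I\|_2\le C\varepsilon^2$ is taken as a hypothesis.
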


\begin{proof}
By Weyl's inequality,
\[
|\lambda_i(\calG_1) - 1| \le \|\calG_1 - I\|_2 \le C\varepsilon^2
\]
for all $i$, which implies the spectral enclosure. The condition number bound follows immediately from $\lambda_{\max}(\calG_1) \le 1 + C\varepsilon^2$ and $\lambda_{\min}(\calG_1) \ge 1 - C\varepsilon^2$.
\end{proof}

\subsection{Algorithmic discussion}
\label{sec:efficiency-attempts}

This section concludes with a discussion of algorithmic choices and a summary of several efficiency-oriented variants that were explored but not demonstrated numerically in \Cref{sec:experiments}. These variants were motivated by the goal of reducing the overhead of the optimizer per-step beyond the baseline {MUDp} update; none reduced the wall-clock time to a target validation loss. 

\textbf{Hyperparameters:} 
For general hyperparameters, the experiments largely adopt those of Muon optimizers. In particular, the weight decay and unified learning rate proposed for Muon in \citet{liu2025muon_scalable} are used, eliminating the separate AdamW and Muon learning rates in favor of a single choice applied (appropriately scaled) to both subsets of parameters. The hyperparameter selection transfers well between Muon and MUD. The primary hyperparameter of MUD is the number of passes. The default of one pass yields the fastest wall-clock time to a given perplexity in most tested cases, with additional passes increasing wall-clock time for typically marginal improvement in convergence of loss. \Cref{sec:experiments} demonstrates examples where two passes yield improved convergence worth the additional computational cost (still $\ll$ Muon); these are cases where standalone AdamW struggles relative to Muon and MUD, indicating a more challenging optimization landscape that benefits from further decorrelation of matrix parameter directions. No instance has been found in which three MUD iterations improve wall-clock time to a given perplexity. A lighter Muon baseline was also tested by reducing the number of Newton--Schulz sweeps (e.g., three iterations instead of the default five). In the single-GPU settings considered, this modification did not yield a meaningful wall-clock speedup relative to the standard five-iteration Muon configuration and often degraded optimization behavior at matched hyperparameters. In some cases, only three Newton-Schulz iterations yielded instabilities in training dynamics not observed with MUD or Muon with five iterations. To provide a strong and representative Muon baseline, all reported results use the default Muon configuration with five Newton-Schulz iterations.

\textbf{Periodic recomputation of a cached operator:}
A distinct property of the triangular-solve-based decorrelation underlying MUD is that it can be viewed as applying a (data-dependent) linear operator to the current momentum direction. In principle, this suggests an amortization strategy: recompute a factor (e.g., triangular factor $T$ or its inverse $T^{-1}$) once every $k$ iterations and apply the cached operator on intermediate iterations, thereby reducing the average cost per step. Multiple variants of this idea (including fixed-period and smoothly increasing schedules for $k$) were implemented and tested in the context of LLM benchmarks in \Cref{sec:experiments}. Across a range of values $k$, no net improvement in wall-clock time was observed to a fixed validation perplexity. Although caching reduced the raw per-step cost in some regimes, the resulting degradation in convergence rate was comparable to (and often larger than) the computational savings, yielding no improvement in perplexity with respect to wall-clock time. 

\section{Numerical experiments}
\label{sec:experiments}

MUD is demonstrated on learning tasks for which Muon is known to be effective. The main results, reported in \Cref{sec:experiments:llms}, cover LLMs in multiple datasets, architectures, and GPUs. A short study on CIFAR-10 speedrunning is presented in \Cref{sec:experiments:cifar}, where the Newton-Schulz overhead is found to be marginal and Muon outperforms MUD in that regime. Last, we demonstrate the efficacy of MUD on a protein language model in \Cref{sec:experiments:esm2_scratch}.

\subsection{Transformer LLMs}
\label{sec:experiments:llms}

The setting is a prediction of the next-token. Given a contiguous stream of integer token IDs, the training objective is standard autoregressive cross-entropy: for token sequence $(x_1,\dots,x_T)$ the model predicts $x_{t+1}$ from the prefix $(x_1,\dots,x_t)$. Three language-modeling corpora differing in source and curation are considered, demonstrating that MUD's wall-clock advantages persist across distinct data distributions. \textbf{WikiText-103} consists of curated Wikipedia articles and is a long-standing benchmark for language modeling; its text is comparatively clean and structured, with long-range dependencies preserved at the document level~\cite{merity2017wikitext}.
\textbf{OpenWebText} (used in many nanoGPT-style experiments) is a WebText proxy built from URLs shared on Reddit, producing a broad but noisier mixture of web genres (blogs, forums, news, tutorials) with topical and stylistic biases induced by the collection pipeline~\cite{gokaslan2019openwebtext}.
Finally, \textbf{FineWeb-Edu} is an educationally filtered subset of the FineWeb corpus derived from Common Crawl on a modern pretraining scale; it is substantially larger and more aggressively filtered/ duplicated than the other benchmarks and is intended to better match contemporary LLM pretraining distributions~\cite{penedo2024fineweb}. All datasets use the same GPT-2 BPE tokenization and a single training script. Validation \emph{ loss} is the average negative logarithmic likelihood (cross-entropy) per token in data that are not available; the corresponding \emph{perplexity} $\mathrm{ppl}=\exp(\mathrm{val\_loss})$ is the conventional scale for language modeling performance. Computational efficiency is quantified through (i) throughput in tokens per second (\texttt{toks/s}), measured during training iterations, and (ii) elapsed wall-clock time to reach a target validation perplexity.

For Muon and MUD, a GPT-style parameter partitioning is used: token and position embeddings and the final language-model head are treated as other parameters and updated using AdamW, while all remaining matrix-shaped parameters (i.e., parameters with $\mathrm{ndim}\ge 2$) are treated as hidden parameters and receive the matrix optimizer update through Muon or MUD. AdamW applies to all parameters in the AdamW baseline, while Muon and MUD apply their matrix updates only to the hidden parameters and use AdamW on the remaining parameters internally. Optimization algorithms are evaluated in small, medium and large GPT-2 model configurations as described in \Cref{tab:gpt2-sizes}. The sequence length (context) is controlled by \texttt{block\_size} and is varied between experiments. 
\begin{table}[!h]
\centering
\small
\begin{tabular}{lcccc}
\toprule
Model & Layers & Heads & $d_{\text{model}}$ & Params \\
\midrule
GPT-2 small  & 12 & 12 & 768  & $\approx$125M \\
GPT-2 medium & 24 & 16 & 1024 & $\approx$355M \\
GPT-2 large  & 36 & 20 & 1280 & $\approx$775M \\
\bottomrule
\end{tabular}
\caption{GPT-2 model configurations used in the experiments (decoder-only Transformers).}
\label{tab:gpt2-sizes}
\end{table}

Unless otherwise specified, the following optimizer and training defaults are held fixed:
\begin{align*}
    &\texttt{warmup\_steps}=500,\;
    \texttt{dropout}=0.0,\;
    \texttt{lr}=10^{-3},\;
    \texttt{min\_lr}=0.1\cdot \texttt{lr} ,\\
    &\texttt{weight\_decay}=10^{-2},\;
    (\beta_1,\beta_2)=(0.9,0.95),\;
    \beta_{\text{Muon/MUD}}=0.95,\;
    \texttt{clip\_grad}=1.0,
\end{align*}
with a cosine learning rate that begins after a linear warmup. The learning rate decays to a nonzero floor given by a fixed fraction of the peak learning rate, i.e., $\texttt{min\_lr}=0.1\cdot \texttt{lr}$. The Muon/MUD learning rates are scaled as proposed in \citet{liu2025muon_scalable}, which has been found to be robust and comparable or superior to tuning an independent matrix-parameter learning rate. Reasonable tuning of the learning rate across all methods identifies the fixed learning rate $10^{-3}$ as consistently best across a range of problem and optimization configurations (with exceptions discussed in the following sections). All tests are run with \texttt{torch.compile} enabled and bf16 autocast. A linear warmup over \texttt{warmup\_steps} $= 500$ iterations is followed by a cosine decay schedule for the remainder of the training. 

The experiments were performed on three single-GPU systems: NVIDIA A100 (PCIe 40GB), AMD Instinct MI250, and NVIDIA Grace Hopper (GH200). The device is specified for each reported result. Unless otherwise stated, all comparisons within a figure or table are performed on the same hardware. Three random seeds ($[1203,3721,7865]$) are used for all results. Due to the noisy nature of stochastic optimization, a rolling mean window of length 7 is applied to statistics and speedups in all plots.

\subsubsection{Throughput}
\label{sec:experiments:throughput}

Before comparing time-to-quality, it is informative to examine the {peak training throughput} (tokens/s) achieved by AdamW, Muon, and MUD1 in representative hardware and training regimes. Throughput is configuration-dependent and has high variation across changes in batch size $B$, context length $T$ (block size), model size, and GPU architecture, all of which alter how much of each step is dominated by forward/backward compute versus optimizer overhead.

Tables~\ref{tab:throughput-gpt2small-nanogpt}, \ref{tab:throughput-gpt2medium-wikitext}, and \ref{tab:throughput-gpt2large-fineweb} report representative peak training performance (tokens/s) together with AdamW and MUD1 speed up relative to Muon, on three GPUs and several settings $(B,T)$. Each table uses a single dataset for clarity (NanoGPT/OpenWebText for GPT-2 small, WikiText-103 for GPT-2 medium, and FineWeb-Edu for GPT-2 large). Across datasets, throughput differences are generally small relative to the optimizer effects reported here; a notable exception is GPT-2 small on WikiText at $(B,T)=(8,1024)$, attributed to data/loader effects and the use of \texttt{int32} tokens (rather than \texttt{uint16} as in the other two corpora), which can modestly affect CPU-to-GPU transfer and batch construction overhead.

\begin{table}[!ht]
\centering
\small
\begin{tabular}{llrrrrrr}
\toprule
GPU & $(B,T)$ & AdamW & Muon & MUD1 & $\frac{\text{MUD}}{\text{Muon}}$ & $\frac{\text{AdamW}}{\text{Muon}}$ \\
\midrule
A100 & $(8,1024)$  & \num{131800} & \num{46900} & \num{87000}  & 1.85 & 2.81 \\
A100 & $(12,2048)$ & \num{130000} & \num{80300} & \num{108000} & 1.35 & 1.62 \\
A100 & $(8,4096)$  & \num{125700} & \num{88000} & \num{113700} & 1.29 & 1.43 \\
\midrule
GH200 & $(8,1024)$  & \num{342400} & \num{109000} & \num{185800} & 1.70 & 3.14 \\
GH200 & $(12,2048)$ & \num{391100} & \num{214600} & \num{295300} & 1.38 & 1.82 \\
\midrule
MI250 & $(8,1024)$  & \num{28200} & \num{14900} & \num{23000}  & 1.54 & 1.89 \\
MI250 & $(12,2048)$ & \num{73700} & \num{64000} & \num{68000}  & 1.06 & 1.15 \\
MI250 & $(8,4096)$  & \num{64900} & \num{57600} & \num{62400}  & 1.08 & 1.13 \\
\bottomrule
\end{tabular}
\caption{Training throughput (tokens/s) for GPT-2 small on the NanoGPT/OpenWebText benchmark. Absolute throughput and speedups relative to Muon are reported.}
\label{tab:throughput-gpt2small-nanogpt}
\end{table}

\begin{table}[!ht]
\centering
\small
\begin{tabular}{llrrrrrr}
\toprule
GPU & $(B,T)$ & AdamW & Muon & MUD1 & $\frac{\text{MUD}}{\text{Muon}}$ & $\frac{\text{AdamW}}{\text{Muon}}$ \\
\midrule
A100 & $(8,1024)$  & \num{49300} & \num{12600} & \num{29100} & 2.31 & 3.91 \\
A100 & $(12,2048)$ & \num{55200} & \num{26400} & \num{43450} & 1.65 & 2.09 \\
\midrule
GH200 & $(8,1024)$  & \num{142900} & \num{33600} & \num{68500}  & 2.04 & 4.25 \\
GH200 & $(12,2048)$ & \num{159000} & \num{72400} & \num{114600} & 1.58 & 2.20 \\
GH200 & $(12,4096)$ & \num{134000} & \num{89400} & \num{116200} & 1.30 & 1.50 \\
\midrule
MI250 & $(8,1024)$  & \num{28100} & \num{14900} & \num{23000} & 1.54 & 1.89 \\
MI250 & $(12,2048)$ & \num{30000} & \num{22700} & \num{27800} & 1.22 & 1.32 \\
\bottomrule
\end{tabular}
\caption{Training throughput (tokens/s) for GPT-2 medium on WikiText-103 (GPT-2 BPE tokens). Absolute throughput and speedups relative to Muon are reported.}
\label{tab:throughput-gpt2medium-wikitext}
\end{table}

\begin{table}[!ht]
\centering
\small
\begin{tabular}{llrrrrrr}
\toprule
GPU & $(B,T)$ & AdamW & Muon & MUD1 & $\frac{\text{MUD}}{\text{Muon}}$ & $\frac{\text{AdamW}}{\text{Muon}}$ \\
\midrule
A100  & $(8,1024)$ & \num{25000} & \num{4500}  & \num{13200} & 2.93 & 5.56 \\
MI250 & $(8,1024)$ & \num{13900} & \num{6200}  & \num{10900} & 1.76 & 2.24 \\
\midrule
GH200 & $(8,1024)$ & \num{70700} & \num{12000} & \num{30600} & 2.55 & 5.89 \\
GH200 & $(8,2048)$ & \num{71600} & \num{20900} & \num{44500} & 2.13 & 3.43 \\
GH200 & $(8,4096)$ & \num{66400} & \num{31400} & \num{52200} & 1.66 & 2.11 \\
\bottomrule
\end{tabular}
\caption{Training throughput (tokens/s) for GPT-2 large on FineWeb-Edu (GPT-2 BPE tokens). Absolute throughput and speedups relative to Muon
are reported.}
\label{tab:throughput-gpt2large-fineweb}
\end{table}

Across all model sizes, configurations, and GPUs, AdamW provides the highest raw tokens/s, as expected from its low optimizer overhead. Muon is consistently slowest per step due to the cost of repeated polar-approximation iterations, while MUD1 sits in between and is often substantially closer to AdamW than Muon. Muon's overhead is most pronounced in (i) regimes with smaller context length or smaller per step computation, where optimizer cost is a larger fraction of step time, and (ii) larger models (e.g., GPT-2 large vs.\ GPT-2 small), where Muon performs many large GEMMs per step across many weight matrices. As context length increases, all methods become more compute-dense, and relative throughput gaps shrink, consistent with optimizer overhead being increasingly amortized by forward/backward compute. Nevertheless, MUD1 consistently narrows the gap to AdamW by replacing Muon's multi-step polar update with a one-pass triangular-solve decorrelation, yielding 1.3--2.6$\times$ higher throughput than Muon in most settings and up to nearly $3\times$ on GPT-2 large.

\subsubsection{NanoGPT-style language modeling}
\label{sec:experiments:nanogpt}

We first evaluate AdamW, Muon, and MUD on OpenWebText (OWT), the primary full-scale corpus of the NanoGPT repository (\url{https://github.com/karpathy/nanoGPT/tree/master}) to reproduce the GPT-2 results. OpenWebText is an open-source recreation of OpenAI’s WebText: it collects links to Reddit posts above a small upvote threshold, aiming to approximate the diversity and quality of the original GPT-2 training data. Following NanoGPT’s standard preprocessing, OWT is tokenized with the GPT-2 BPE tokenizer and serialized into contiguous binary token streams (train.bin, val.bin) for efficient memory-mapped loading during training; the prepared split is roughly ~9B training tokens with a small holdout validation set.

\begin{figure}[!htb]
\centering
\includegraphics[width=0.425\linewidth]{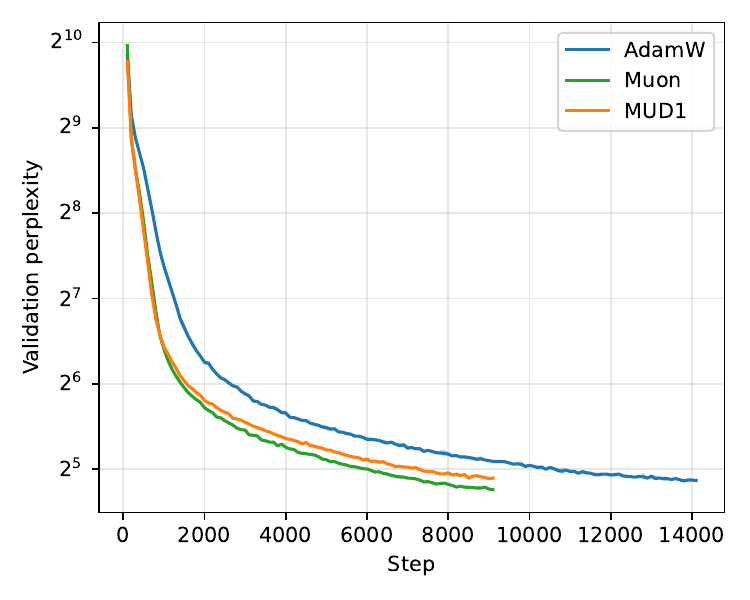}
\includegraphics[width=0.425\linewidth]{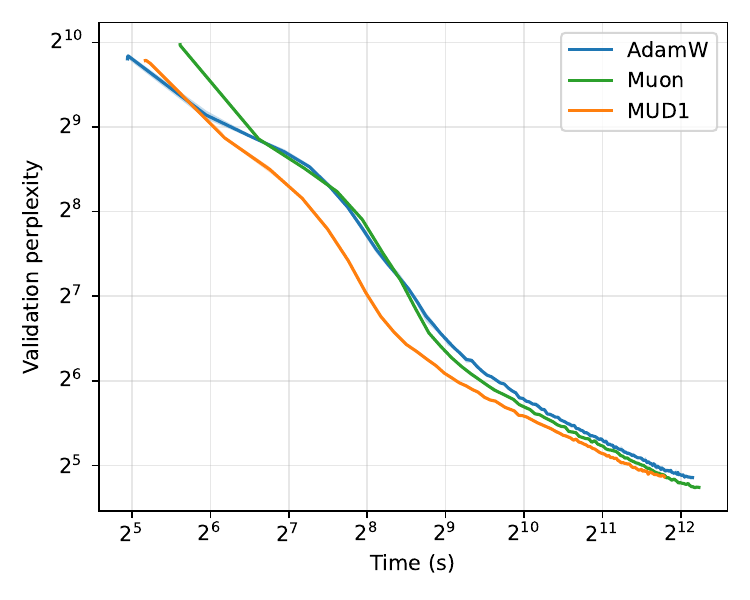}
\caption{Nvidia GH200 (\texttt{bs}=12,\ \texttt{T}=2048, GPT2-Medium): validation perplexity vs.\ step and vs.\ wall-clock time for AdamW, Muon, and MUD.}
\label{fig:nanoMed-ppl}
\end{figure}

\textbf{GPT2-Medium:} The GPT2-Medium architecture is trained with batch size 12 and blocksize 2048, yielding $\approx25$K tokens per step, on a GH200 chip. Muon and MUD1 are trained to 6000 steps and AdamW to 9000 steps to reach comparable validation perplexity. Validation perplexity as a function of step and training time (excluding evaluation time) is shown in \Cref{fig:nanoMed-ppl}. Muon and MUD1 decrease validation perplexity significantly faster early in training than AdamW. In terms of wall-clock time, Muon offers only marginal improvements over AdamW until smaller validation perplexities; MUD1 provides meaningful accelerations in all observed regimes, particularly in the earlier part of training. \Cref{fig:nanoMed-speedup} plots speedup relative to AdamW and the improvement rate in wall-clock time relative to AdamW versus validation perplexity. MUD1 yields 20--35\% speedups over AdamW and comparable speedups over Muon until smaller validation perplexities, where Muon's improved orthogonalization yields improved convergence. At smaller validation perplexities, the improvement rate in wall-clock time relative to AdamW decreases. This is consistent with decorrelation-style matrix updates providing their largest practical benefit in the more nonstationary / high-loss phase, while later training becomes increasingly constrained by the intrinsic optimization difficulty of the remaining error, although MUD1 and Muon still produce a validation perplexity reduction 5--10\% faster than AdamW.

\begin{figure}[!hbt]
\centering
\includegraphics[width=0.425\linewidth]{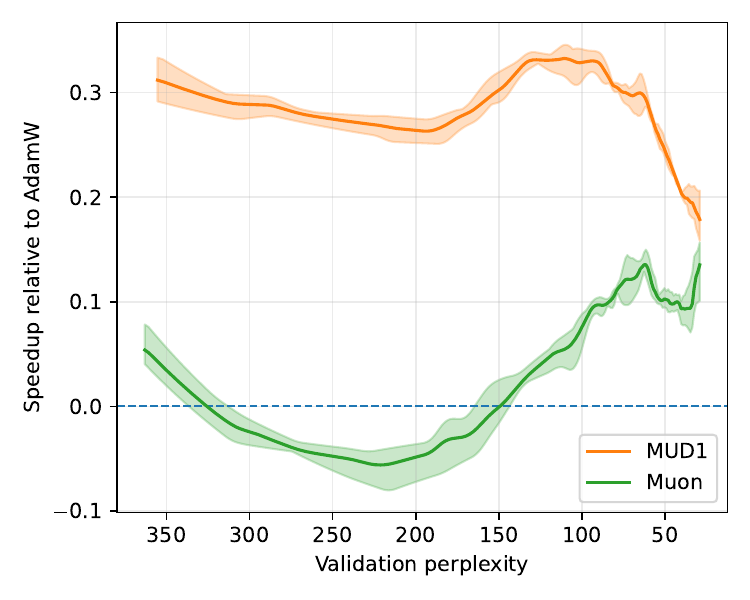}
\includegraphics[width=0.425\linewidth]{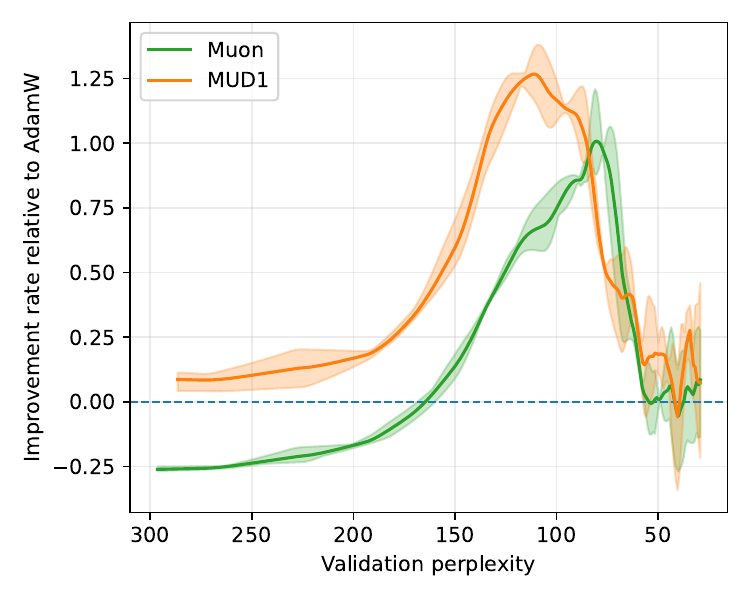}
\caption{Nvidia GH200 (\texttt{bs}=12,\ \texttt{T}=2048, GPT2-Medium): (left) speedup relative to AdamW vs.\ achieved validation perplexity; (right) relative perplexity improvement rate vs.\ perplexity, showing diminishing relative gains at low perplexity.}
\label{fig:nanoMed-speedup}
\end{figure}

\textbf{GPT2-Small:} A different configuration is now considered: the less expressive GPT2-Small architecture with batch size 24 and block size 4096, still on a GH200 chip. This yields almost 100K tokens per step, making the optimization landscape stiffer and more difficult to resolve. Repeated tests demonstrate that standalone AdamW is unstable above \texttt{lr\_adamw} $=3\cdot 10^{-4}$, so this modified learning rate is used for AdamW. Muon and MUD1 remain stable at the default higher learning rate $10^{-3}$. In contrast to the GPT2-Medium results (\Cref{fig:nanoMed-ppl,fig:nanoMed-speedup}), MUD1 produces noticeably worse convergence than Muon in smaller validation perplexities, motivating the inclusion of MUD2. \Cref{fig:nanoSmall} shows the speedup of Muon, MUD1, and MUD2 relative to AdamW and the validation perplexity versus the wall-clock time restricted to later training dynamics. Muon and both MUD variants yield 35--50\% speedups over AdamW across all observed regimes. MUD1 is fastest early in training, but as validation perplexity drops below 50, MUD1 performance quickly trails behind MUD2 and Muon, whose convergence remains more robust, and Muon performs best at smaller validation perplexities. This is apparent in the right plot, where Muon decreases the validation perplexity faster than all other methods, starting around perplexity 35. 

\begin{figure}[!htb]
\centering
\includegraphics[width=0.425\linewidth]{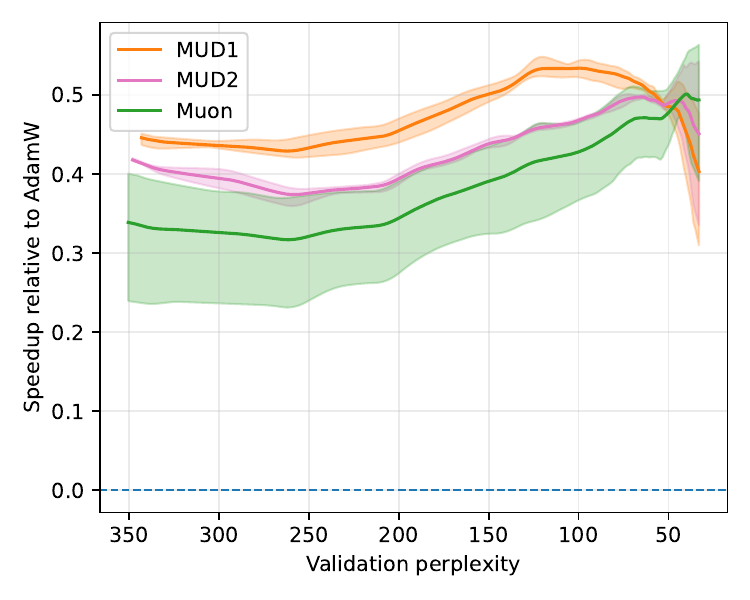}
\includegraphics[width=0.425\linewidth]{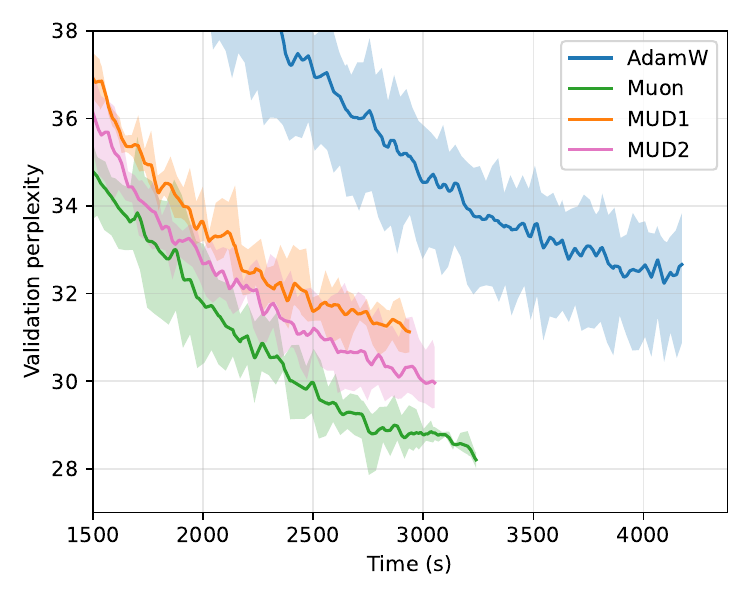}
\caption{Nvidia GH200 (\texttt{bs}=12,\ \texttt{T}=2048, GPT2-Small): validation perplexity vs.\ step and vs.\ wall-clock time for AdamW, Muon, and MUD.}
\label{fig:nanoSmall}
\end{figure}

\subsubsection{WikiText-103}
\label{sec:experiments:wikitext}

Training is now performed on the WikiText-103 dataset, preprocessed into GPT-2 BPE token streams stored as \texttt{train.bin} and \texttt{val.bin}. The learning problem and metrics are identical to the NanoGPT-style setting, but the data distribution differs: WikiText consists of curated Wikipedia articles with distinct structure and repetition patterns. Training throughput (\texttt{toks/s}) and wall-clock time to reach matched validation perplexity are reported to demonstrate that the observed speedups persist under a different data distribution while holding model architecture and optimizer hyperparameters fixed. The GPT2-Medium architecture is fixed, and GPU and blocksize are varied.

\begin{figure}[!htb]
\centering
\includegraphics[width=0.425\linewidth]{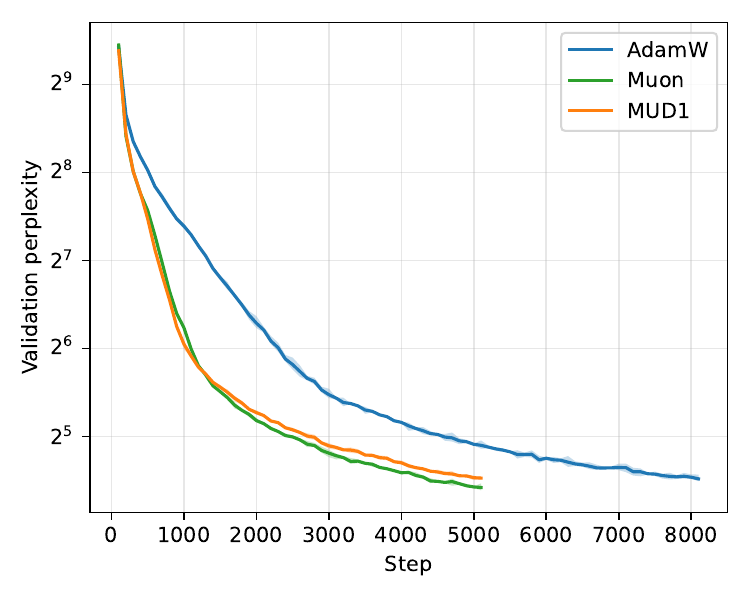}
\includegraphics[width=0.425\linewidth]{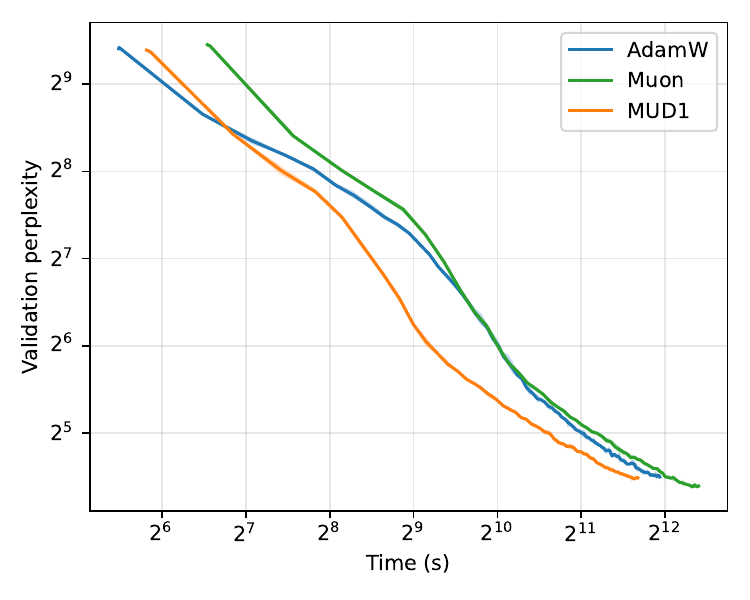}
\caption{Nvidia A100 (\texttt{bs}=12,\ \texttt{T}=2048, GPT2-Medium): validation perplexity vs.\ step and vs.\ wall-clock time for AdamW, Muon, and MUD.}
\label{fig:wikitext-a100-ppl}
\end{figure}

\textbf{NVIDIA A100 (PCIe 40GB):} Training is performed on an Nvidia A100 GPU with batch size 12 and block size 2048. \Cref{fig:wikitext-a100-ppl} shows validation perplexity as a function of both optimization steps and wall-clock time for the A100 tests. Although AdamW has the highest raw throughput, both Muon and MUD reduce perplexity substantially faster per step early and mid training. For this particular problem formulation, despite an effective reduction in perplexity, Muon does not appear to yield meaningful speedups over AdamW. In contrast, MUD achieves the best overall wall-clock time-to-perplexity among the three methods throughout the full tested training regime. The wall-clock efficiency is visualized in \Cref{fig:wikitext-a100-speedup}, which plots (i) the speedup of Muon and MUD relative to AdamW as a function of the achieved perplexity and (ii) the relative improvement rate of the perplexity versus the wall-clock time. On A100, the overhead computational cost of Muon yields a 20--30\% slowdown relative to AdamW early in training and at best matches AdamW in wall-clock time to reach a given validation perplexity. In contrast, MUD yields 20--45\% speedups over AdamW (and more over Muon) across all observed training regimes. Toward the slow tail of training, the improvement rate with respect to wall-clock time of all three methods becomes comparable, so extended training would likely result in slowly diminishing returns as early speedups are averaged over a longer total time. 

\begin{figure}[!htb]
\centering
\includegraphics[width=0.425\linewidth]{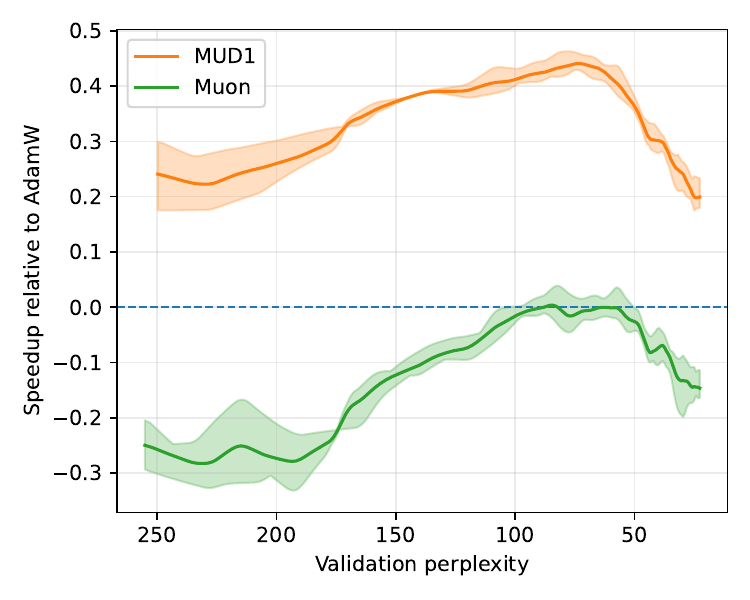}
\includegraphics[width=0.425\linewidth]{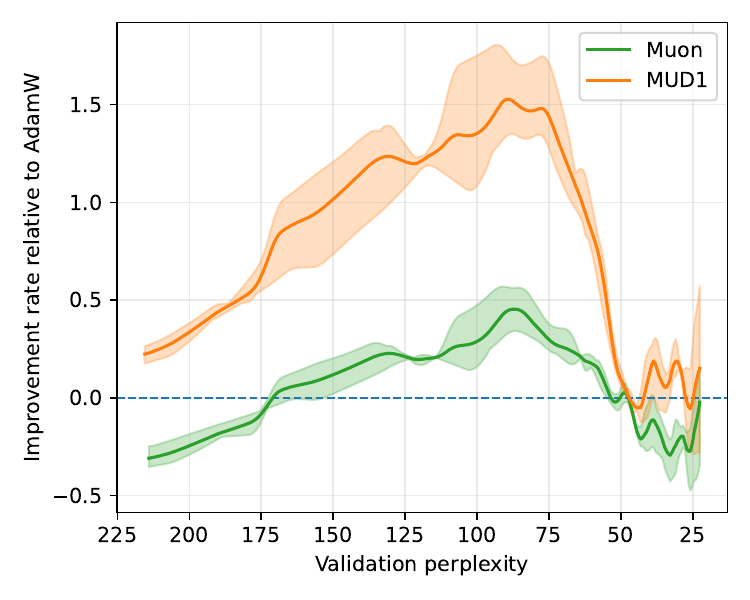}
\caption{Nvidia A100 (\texttt{bs}=12,\ \texttt{T}=2048, GPT2-Medium): (left) speedup relative to AdamW vs.\ achieved validation perplexity; (right) relative perplexity improvement rate vs.\ perplexity, showing diminishing relative gains at low perplexity.}
\label{fig:wikitext-a100-speedup}
\end{figure}

\textbf{Nvidia GH200:} Batch size 12 and large block size 4096 are used on the GH200 chip. Similar qualitative behavior is observed for MUD1, demonstrating significant speedups of 20--40\% to reach a target perplexity over AdamW and 10--25\% over Muon across the full range of training dynamics considered. As perplexity decreases (late training), the relative improvement rates of MUD/Muon compared to AdamW diminish, and the curves become closer. 

\begin{figure}[!htb]
\centering
\includegraphics[width=0.425\linewidth]{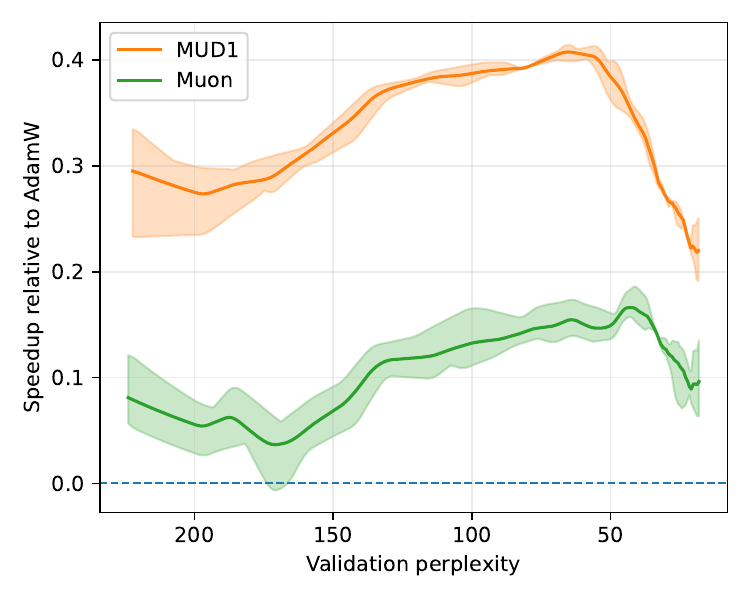}
\includegraphics[width=0.425\linewidth]{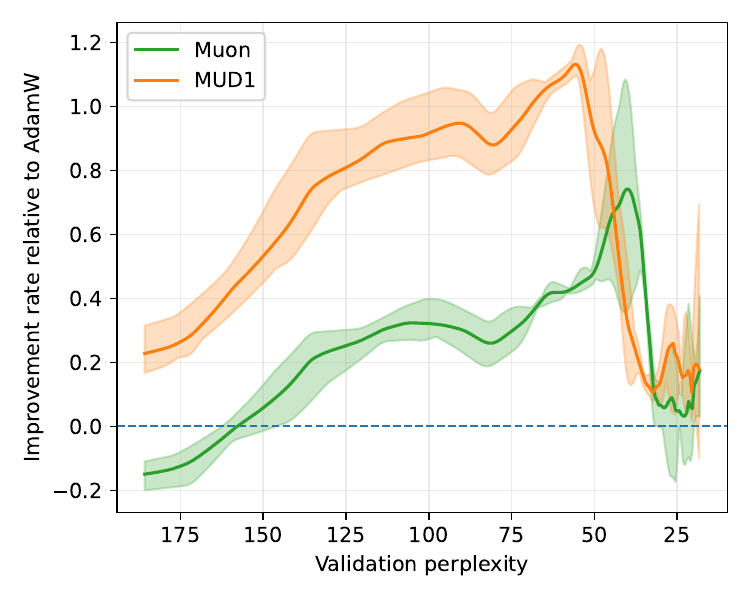}
\caption{Nvidia GH200 (\texttt{bs}=12,\ \texttt{T}=4096, GPT2-Medium): (left) speedup relative to AdamW vs.\ achieved validation perplexity; (right) relative perplexity improvement rate vs.\ perplexity, again showing diminishing relative gains at low perplexity.}
\label{fig:wikitext-gh-speedup}
\end{figure}

The speedup of MUD1 over AdamW is here comparable to that of the smaller blocksize on A100 (\Cref{fig:wikitext-a100-speedup}), but the performance of Muon is significantly better. This difference is unrelated to the optimization quality and is due solely to the improved Muon performance in this configuration. From the peak performance measurements in \Cref{tab:throughput-gpt2medium-wikitext}, the implied per-step overhead relative to AdamW varies substantially by GPU: on MI250, MUD incurs only $\approx 8.6\%$ overhead (vs.\ $\approx 32.8\%$ for Muon), while on A100 the overhead is much larger ($\approx 27.1\%$ for MUD and $\approx 109\%$ for Muon), which explains the relatively poor performance of Muon in \Cref{fig:wikitext-a100-speedup}. On GH200 at $T=4096$, the overhead is intermediate ($\approx 15.3\%$ for MUD and $\approx 49.8\%$ for Muon). 

\subsubsection{FineWeb-Edu}

To demonstrate that the wall-clock gains of MUD persist on a modern web-scale corpus with larger LLM models, models are trained on FineWeb-Edu, an educationally filtered subset of the FineWeb dataset~\cite{penedo2024fineweb}. A publicly released FineWeb-Edu preprocessing is used in a single concatenated GPT-2 BPE token stream, enabling the same training script and metrics as in the preceding experiments. Batch size 8 and block size 2048 are used on the GH200 chip (this does not fit in memory on the A100 or AMD MI250). AdamW is unstable at the learning rate $10^{-3}$ and converges slowly at $6\cdot 10^{-4}$; the rate $3\cdot 10^{-4}$ is found to be robust and is used for AdamW. Muon and MUD remain stable and perform best at $\approx 10^{-3}$. \Cref{fig:fineweb-large} shows the validation loss as a function of the step and the speedup relative to AdamW as a function of the validation loss (loss is used rather than perplexity here to provide an alternative performance metric) for MUD1, MUD2, Muon, and AdamW.

The overhead of Muon is substantial, achieving a given validation loss at least 60\% slower than AdamW in all cases. In contrast, MUD balances the optimization benefits of decorrelation with minimal computational overhead, providing a robust 10--20\% speedup over AdamW in all cases. MUD2 was also tested and provides minimal improvement in convergence, so the additional computational cost is not justified in this setting. This is the behavior observed in most experiments, with a few exceptions such as \Cref{fig:nanoSmall}.

\begin{figure}[!htb]
\centering
\includegraphics[width=0.425\linewidth]{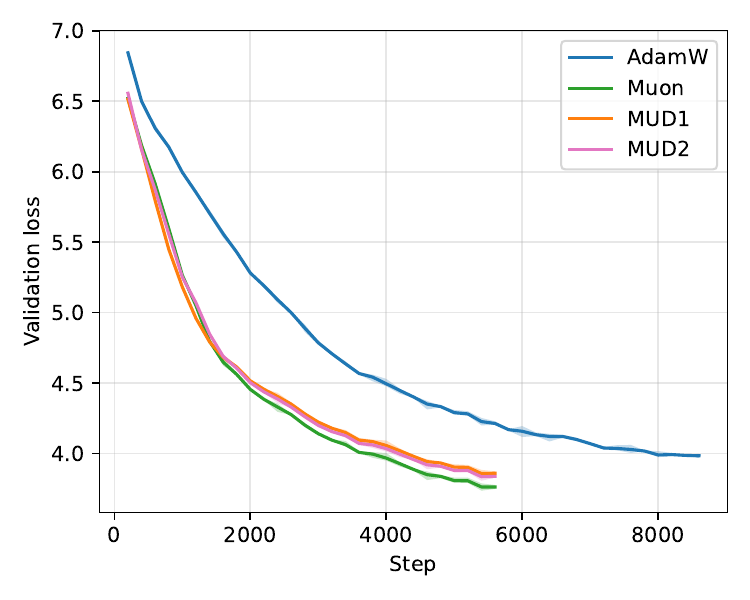}
\includegraphics[width=0.425\linewidth]{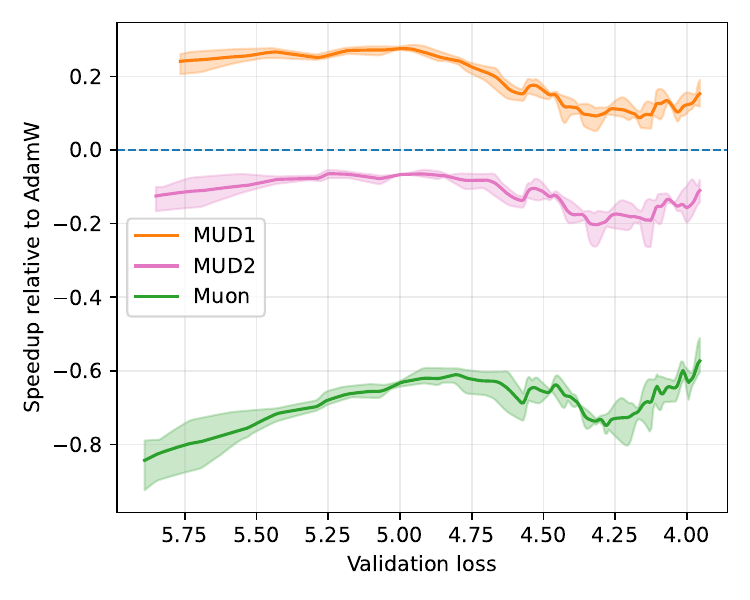}
\caption{Nvidia GH200 (\texttt{bs}=8,\ \texttt{T}=2048, GPT2-Large): (left) Validation loss as a function of step; (right) speedup relative to AdamW vs.\ achieved validation loss.}
\label{fig:fineweb-large}
\end{figure}

\subsection{CIFAR-10 speedrunning (AIRBench)}
\label{sec:experiments:cifar}

To identify regimes where MUD does not provide a large computational advantage over Muon, MUD is evaluated in the CIFAR-10 speedrunning setting using the public AIRBench Muon script associated with \citep{jordan202494}. The Muon speedrunning implementation is taken and the Newton--Schulz (NS) orthogonalization step is replaced with MUD decorrelation steps, while keeping the rest of the training recipe unchanged (augmentations, schedule, model, etc.). Two variants of MUD are reported, {MUD1} and {MUD3}. For Muon, NS is compared with three and five iterations ({Muon3} and {Muon5}). The results are shown in \Cref{tab:cifar-airbench}, averaged over 20 independent runs, reporting the mean and standard deviation of the accuracy of the final validation and the total run time. 

\begin{table}[!htb]
\centering
\small
\begin{tabular}{lcccc}
\toprule
Setting & Method & Val. Acc. (mean $\pm$ std) & Time [s] (mean $\pm$ std) \\
\midrule
\multirow{4}{*}{8 epochs (default)} 
& MUD1  & $0.9334 \pm 0.0014$ & $2.9795 \pm 0.0101$ \\
& MUD3  & $0.9339 \pm 0.0013$ & $3.2644 \pm 0.0111$ \\
& Muon3 & $0.9402 \pm 0.0011$ & $2.9690 \pm 0.0071$ \\
& Muon5 & $0.9403 \pm 0.0010$ & $3.0393 \pm 0.0085$ \\
\midrule
\multirow{4}{*}{13 epochs (+ momentum 0.7)} 
& MUD1  & $0.9407 \pm 0.0014$ & $4.7498 \pm 0.0178$ \\
& MUD3  & $0.9405 \pm 0.0011$ & $5.2078 \pm 0.0132$ \\
& Muon3 & $0.9445 \pm 0.0014$ & $4.7565 \pm 0.0157$ \\
& Muon5 & $0.9444 \pm 0.0010$ & $4.8239 \pm 0.0139$ \\
\bottomrule
\end{tabular}
\caption{CIFAR-10 speed running on AIRBench. Starting from the Muon speed running script, the Muon Newton-Schulz step is exchanged for MUD, using either MUD1 or MUD3. Muon baselines use 3 or 5 NS iterations. Results are mean $\pm$ std across 20 runs.}
\label{tab:cifar-airbench}
\end{table}

Across both settings, MUD does not retain a meaningful per-run wall-clock advantage over Muon, despite the theoretical complexity wins in \Cref{tab:muon-mud-cost}. In particular, MUD1 is only marginally faster than Muon5 in one configuration (8 epochs) and essentially tied in the other (13 epochs), while MUD3 is consistently slower than both Muon variants. The absence of a strong speed advantage is attributed to the convnet speedrunning regime: the overall step is not dominated by large dense matrix multiplications, and the GEMM-heavy NS updates for the relatively small matrices involved are highly optimized on modern GPUs compared to triangular-solve-based routines. In terms of accuracy, Muon achieves slightly better final validation accuracy than MUD in both settings, consistent with the trend observed in the Transformer/LLM experiments, where Muon can yield better optimization efficiency per step. Finally, increasing the strength of the orthogonalization/decorrelation step is not advantageous in this benchmark: moving from MUD1 to MUD3 increases wall-clock time without improving accuracy, and moving from Muon3 to Muon5 also provides negligible accuracy gains for a measurable time increase. Overall, this benchmark suggests that MUD is competitive in small vision speedrunning problems, but its primary advantages are more pronounced in large Transformer-style dense-matrix training regimes.

\subsection{ESM-2 150M training on omg\_prot50}
\label{sec:experiments:esm2_scratch}

This subsection examines the training of the ESM-2 150M protein language model on the \texttt{omg\_prot50} masked-language-modeling (MLM) task, which focuses on protein sequence modeling rather than natural-language text. The model used here is the 150M ESM-2 checkpoint with 30 transformer encoder layers, hidden size $H=640$, intermediate width $2560$, and 20 attention heads. It is referred to throughout as the 150M model, consistent with the standard public checkpoint naming and configuration.
The training stream is drawn from the entire \texttt{tattabio/ omega \_prot50} dataset in streaming mode, so no sequence is repeated within a 5{,}000-step run. The validation set consists of 2{,}500 sequences sampled once with a separate random seed before training begins, producing 312 evaluation batches.

We compare 5{,}000 training steps of AdamW, Muon, and two MUD variants on a single AMD Instinct MI300X. All runs use batch size 8, sequence length 512, 500 warmup steps, cosine decay to $10\%$ of the peak learning rate, and gradient clipping at 1.0. The highest learning rates are $\eta_{\mathrm{AdamW}} = 6 \times 10^{-4}$, $\eta_{\mathrm{Muon}} = 3\times 10^{-4}$, and $\eta_{\mathrm{MUD}} = 4 \times 10^{-3}$. Figure~\ref{fig:esm2-scratch-ppl} reports validation perplexity versus training step and wall-clock time on log--log axes.

After 5{,}000 steps, AdamW reaches its best validation perplexity of 17.18 in 466 seconds of wall-clock time, Muon reaches 16.84 in 997 seconds, and MUD1 and MUD2 reach 16.86 and 16.80 in 636 and 710 seconds, respectively. Thus, in this protein MLM task, Muon and both MUD variants achieve lower final validation perplexity than AdamW, with MUD matching Muon-level quality at substantially lower wall-clock cost. Unlike several of the preceding LLM experiments, the validation-perplexity gap between AdamW and Muon/MUD does not narrow late in training; instead, the advantage of Muon and MUD persists and narrows modestly over the 5{,} 2,000-step horizon.

\begin{figure}[!htb]
\centering
\includegraphics[width=0.90\linewidth]{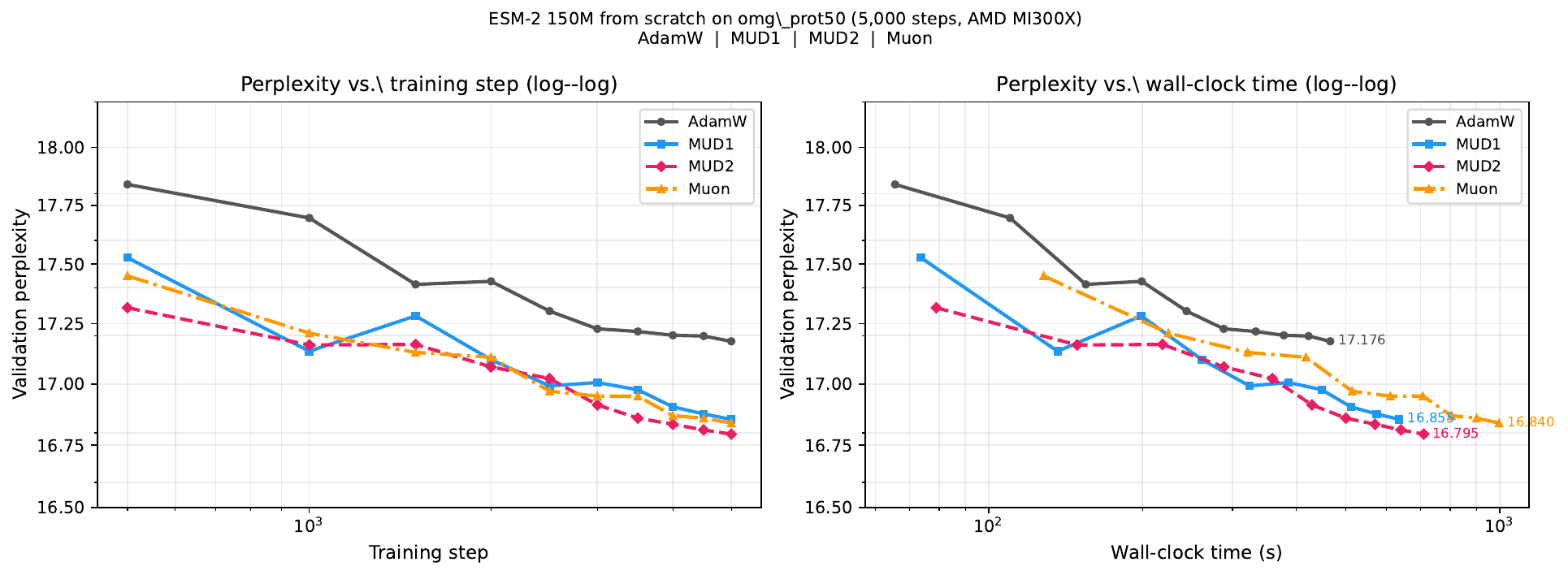}
\caption{ESM-2 150M training from scratch on omg\_prot50 (5{,}000 steps, AMD MI300X): validation perplexity vs.\ training step (left) and vs.\ wall-clock time (right), both on log--log axes, for AdamW ($\eta=6\times10^{-4}$), Muon ($\eta_{\mathrm{muon}}=3\times10^{-4}$), MUD1, and MUD2 ($\eta=4\times10^{-3}$). Final perplexities at step 4{,}999 are 17.176 (AdamW, 466s), 16.840 (Muon, 997s), 16.855 (MUD1, 636s), and 16.795 (MUD2, 710s). All four curves are still descending at the end of the training budget.}
\label{fig:esm2-scratch-ppl}
\end{figure}

\Cref{tab:esm2-scratch-ttt} reports wall-clock time to reach selected target validation perplexities, with speedups computed relative to AdamW via linear interpolation between evaluation points. MUD1 and MUD2 achieve $1.81-3.12\times$ speedups over AdamW at the evaluated targets, while Muon speedups are more modest at $1.1-1.66\times$. To that end, MUD1 is faster than Muon to every perplexity target in this experiment, eventually reaching comparable final quality in $36\%$ less total wall time. MUD1 and MUD2 are comparable to target perplexities, with MUD2 being slightly less noisy and achieving lower final validation perplexity. Unlike most LLM experiments, the additional cost of MUD2 is compensated by improved convergence in this setting. 

\begin{table}[!htb]
\centering
\small
\begin{tabular}{lcccc}
\toprule
Target ppl & AdamW & Muon & MUD1 & MUD2 \\
\midrule
17.5 & 141.2s & 127.8s (1.10$\times$) & \textbf{78.0s} (1.81$\times$) & 78.2s (1.81$\times$) \\
17.3 & 245.1s & 188.4s (1.30$\times$) & 110.0s (2.23$\times$) & \textbf{86.2s} (2.84$\times$) \\
17.2 & 392.8s & 236.7s (1.66$\times$) & \textbf{126.0s} (3.12$\times$) & 131.3s (2.99$\times$) \\
\midrule
Final ppl (step 4{,}999) & 17.176 & 16.840 & 16.855 & \textbf{16.795} \\
Total wall time          & 466s  & 997s  & 636s  & 710s \\
\bottomrule
\end{tabular}
\caption{ESM-2 150M training from scratch on omg\_prot50 (AMD MI300X, 5{,}000 steps): wall-clock time to first reach each target validation perplexity, with speedup relative to AdamW in parentheses. MUD1 and MUD2 reach every target faster than both AdamW and Muon.}
\label{tab:esm2-scratch-ttt}
\end{table}

\section{Conclusions}
\label{sec:conclusion}

We introduced \textbf{MUD} (MomentUm Decorrelation), a triangular solution-based whitening operator for matrix-shaped momentum updates in Transformer training. MUD provides an alternative numerical route to orthogonalized-update optimizers: instead of approximating a polar factor via repeated Newton--Schulz iterations (Muon), it applies a one- or few-pass approximate orthogonalization built from a lower-triangular approximation of the Gram matrix and a forward triangular solve. This preserves the matrix-aware motivation of orthogonalized momentum methods while substantially reducing per-step overhead in regimes where optimizer cost is a meaningful fraction of wall-clock training time.

On the theory side, fixed point properties are established, the inner Gram transform is related to symmetric Gauss--Seidel preconditioning, and a local contraction result is proved that yields quadratic convergence in vector-induced norms near the fixed point. On the empirical side, MUD produces consistent wall-clock improvements in time-to confusion relative to tuned AdamW and Muon in three text corpora (OpenWebText/NanoGPT, WikiText-103, FineWeb-Edu), three GPU platforms (A100, MI250, GH200), and GPT-2 small/medium / large architectures, including GPT-2 large (775M) where Muon’s optimizer overhead is particularly pronounced due to the quantity and size of linear weight matrices. Throughput measurements complement these findings: relative to Muon, MUD improves the maximum tokens/s by roughly $1.3$--$2.6\times$ in most configurations and up to nearly $3\times$ in GPT-2 large, demonstrating that a single triangular decorrelation pass can recover much of Muon’s optimization benefit at substantially lower cost. Finally, efficacy on other (non-text) transformer architectures is demonstrated by training the ESM-2 150M protein language model on the \texttt{omg\_prot50} masked-language-modeling task, where MUD matches the Muon-level validation perplexity with substantially reduced wall-clock time.

Future work will explore the composition of MUD with recent developments and advances in orthogonalization techniques for training transformers, such as those in \citet{si2025adamuon,lau2025polargrad,amsel2025polarexpress,grishina2025cans,boissin2025turbomuon}, as well as work on extending MUD to large-scale problems and distributed memory \citep{liu2025muon_scalable}. Results in this paper suggest that at smaller validation perplexities, the advantage of Muon and MUD over AdamW decreases, which is conceptually related to recent results demonstrating that quasi-Newton 2nd order optimizers are less effective at grokking than adaptive first order methods \citep{jiang2026convergence}. This motivates future work by training highly accurate models and fine-tuning regimes, but this may require training from scratch, as it was observed in \citet{liu2025muon_scalable} that AdamW-pre-trained models did not generalize/fine-tune well with Muon, and vice-versa.

\subsection*{Acknowledgements}
This work was funded in part by the National Nuclear Security Administration Interlab Laboratory Directed Research and Development program under project number 20250861ER. Los Alamos National Laboratory report number LA-UR-26-22012.  This research used resources provided by the Darwin testbed at Los Alamos National Laboratory (LANL) which is funded by the Computational Systems and Software Environments subprogram of LANL's Advanced Simulation and Computing program (NNSA/DOE). 

\bibliographystyle{plain}
\bibliography{refs.bib}

\clearpage
\appendix
\section{PyTorch implementation of MUD}
\begin{lstlisting}[language=Python]
"""
MUD (MomentUm Decorrelation) optimizer: a lightweight PyTorch implementation.

- Applies MUD to matrix-shaped parameters (ndim == 2): momentum + Nesterov lookahead
  followed by a multi-pass triangular-solve decorrelation (row-whitening surrogate).
- Applies AdamW to all other parameters (embeddings, heads, biases, layernorms, etc.).
"""
from __future__ import annotations
import math
from typing import Iterable, Tuple
import torch

@torch.no_grad()
def mud_whiten(M: torch.Tensor, passes: int = 1, eps: float = 1e-8) -> torch.Tensor:
    """ Multi-pass MUD transform on a 2D matrix M (n, m). """
    assert M.ndim == 2, "mud_whiten expects a 2D tensor"
    if passes < 1:
        raise ValueError(f"passes must be >= 1, got {passes}")

    n, m = M.shape
    transposed = n > m
    Q = M.t().contiguous() if transposed else M.contiguous()
    Q = Q.float() # fp32 for TRSM

    for _ in range(int(passes)):
        Q = Q / Q.norm(dim=1, keepdim=True).clamp_min(eps) # Row normalization
        G = Q @ Q.t() # Row Gram (k,k)
        T = torch.tril(G) # Lower-triangular of Gram
        Q = torch.linalg.solve_triangular(T, Q, upper=False) # Forward solve: T X = Q
        Q = Q / Q.norm(dim=1, keepdim=True).clamp_min(eps) # Renormalize rows

    if transposed:
        Q = Q.t().contiguous()

    return Q.to(dtype=M.dtype)

class MUDOptimizer(torch.optim.Optimizer):
    """MUD + AdamW hybrid optimizer."""
    def __init__(
        self,
        mud_params: Iterable[torch.nn.Parameter],
        adamw_params: Iterable[torch.nn.Parameter],
        lr: float = 1e-3,
        weight_decay: float = 1e-2,
        beta_mud: float = 0.95,
        adamw_betas: Tuple[float, float] = (0.9, 0.95),
        mud_passes: int = 1,
        eps: float = 1e-8,
    ):
        self._eps = float(eps)
        self._mud_passes = int(mud_passes)
        if self._mud_passes < 1:
            raise ValueError(f"mud_passes must be >= 1, got {mud_passes}")

        mud_params = list(mud_params)
        adamw_params = list(adamw_params)
        param_groups = [
            dict(params=mud_params, lr=lr, weight_decay=weight_decay, beta_mud=beta_mud, use_mud=True),
            dict(params=adamw_params, lr=lr, weight_decay=weight_decay, adamw_betas=adamw_betas, use_mud=False),
        ]
        defaults = dict(lr=lr, weight_decay=weight_decay)
        super().__init__(param_groups, defaults)

    @torch.no_grad()
    def step(self, closure=None):
        loss = None
        if closure is not None:
            with torch.enable_grad():
                loss = closure()

        for group in self.param_groups:
            if group.get("use_mud", False):
                self._mud_step(group)
            else:
                self._adamw_step(group)
        return loss

    def _mud_step(self, group: dict):
        lr = float(group["lr"])
        wd = float(group["weight_decay"])
        beta = float(group["beta_mud"])

        for p in group["params"]:
            if p.grad is None:
                continue
            g = p.grad
            state = self.state[p]
            if len(state) == 0:
                state["momentum"] = torch.zeros_like(p)

            p.mul_(1.0 - lr * wd) # decoupled weight decay
            m = state["momentum"]
            m.mul_(beta).add_(g) # momentum
            u = g + beta * m # Nesterov lookahead
            if p.ndim == 2:
                q = mud_whiten(u, passes=self._mud_passes, eps=self._eps)
                scale = 0.2 * math.sqrt(max(int(p.size(0)), int(p.size(1))))
                p.add_(q, alpha=-lr * scale)
            else:
                p.add_(u, alpha=-lr) # fallback for non-matrix params

    def _adamw_step(self, group: dict):
        lr = float(group["lr"])
        wd = float(group["weight_decay"])
        b1, b2 = group.get("adamw_betas", (0.9, 0.95))
        b1, b2 = float(b1), float(b2)
        eps = self._eps

        for p in group["params"]:
            if p.grad is None:
                continue
            g = p.grad
            state = self.state[p]
            if len(state) == 0:
                state["step"] = 0
                state["m"] = torch.zeros_like(p)
                state["v"] = torch.zeros_like(p)

            state["step"] += 1
            t = state["step"]
            p.mul_(1.0 - lr * wd) # decoupled weight decay
            m = state["m"]
            v = state["v"]
            m.mul_(b1).add_(g, alpha=1.0 - b1)
            v.mul_(b2).addcmul_(g, g, value=1.0 - b2)
            m_hat = m / (1.0 - b1 ** t)
            v_hat = v / (1.0 - b2 ** t)
            p.addcdiv_(m_hat, v_hat.sqrt().add_(eps), value=-lr)
\end{lstlisting}

\end{document}